\newtheorem{definition}{Definition}[section]
\newtheorem{theorem}{Theorem}[section]
\newtheorem{remark}{Remark}[section]
\title{Randomized Approach to Matrix Completion: \\ Applications in Recommendation Systems \\ and Image Inpainting\thanks{This manuscript corresponds to the accepted version of the article published in Machine Learning.
The final authenticated Version of Record is available online at: \url{https://doi.org/10.1007/s10994-026-06995-2}}}
\author{
  Antonina Krajewska\\
  NASK National Research Institute\\
  Warsaw, Poland\\
  \texttt{antonina.krajewska@nask.pl}\\
  \And
  Ewa Niewiadomska-Szynkiewicz\\
  Warsaw University of Technology\\
  Warsaw, Poland\\
  \texttt{ewa.szynkiewicz@pw.edu.pl}\\
}
\begin{document}

\maketitle

\begin{abstract}We present a novel method for matrix completion, specifically designed for matrices where one dimension is significantly larger than the other. Our Columns Selected Matrix Completion (CSMC) method combines Column Subset Selection with Low-Rank Matrix Completion to efficiently reconstruct incomplete datasets. CSMC substantially reduces computational cost while preserving the solution quality of state-of-the-art convex matrix completion techniques. Each stage of CSMC involves solving a convex optimization problem. We introduce two algorithmic implementations of CSMC, each tailored to problems of different scales. A formal analysis is provided, outlining the necessary assumptions and the probability of exact recovery. To evaluate the impact of matrix size, rank, and the ratio of missing entries on solution quality and runtime, we conducted experiments on synthetic data. The method was also applied to two real-world tasks: recommendation systems and image inpainting. Our results show that CSMC achieves substantial runtime improvements while maintaining competitive accuracy compared to leading convex optimization-based methods.
\end{abstract}

\keywords{Matrix completion \and Column Subset Selection \and  Low-rank models \and Image inpainting \and Collaborative filtering \and Nuclear Norm \and Convex Optimization}

%%\pacs[JEL Classification]{D8, H51}

%%\pacs[MSC Classification]{35A01, 65L10, 65L12, 65L20, 65L70}

\section{Introduction}

Images, time series, and graphs are examples of data types that can naturally be represented as real-valued matrices. In practice, these matrices often contain missing entries due to data collection costs, sensor errors, or privacy constraints. To fully leverage matrix-based datasets, it is essential to address the missing data problem. Matrix completion (MC) methods often exploit the low-rank structure of data matrices, which captures inherent correlations and enables effective recovery of missing entries.

Matrix completion has found applications in diverse fields such as signal processing, sensor localization, medicine and genomics \cite{Cai23, Cai22cur, le2018light, Mai25}. Its significance was underscored by the Netflix Prize challenge, which focused on predicting unknown ratings using collaborative filtering based on the preferences of many users \cite{Moitra18, Zhang2024, Li23}. The task involves completing the sparse user-movie rating matrix $\mathbf{M} \in \mathbb{R}^{n_1 \times n_2}$, where $n_1$ (the number of users) is significantly larger than $n_2$ (the number of movies). This creates an imbalanced matrix in which most entries are missing.

Singular Value Decomposition (SVD) is fundamental to many matrix completion algorithms. Any matrix $\mathbf{M} \in \mathbb{R}^{n_1 \times n_2}$ can be written as  
\begin{align} 
\mathbf{M} = \mathbf{U}\mathbf{\Sigma}\mathbf{V}^T,
\end{align}
where $\mathbf{U} \in \mathbb{R}^{n_1 \times n_1}$ and $\mathbf{V} \in \mathbb{R}^{n_2 \times n_2}$ are orthogonal, and $\mathbf{\Sigma} \in \mathbb{R}^{n_1 \times n_2}$ is diagonal with nonnegative singular values in decreasing order. The rank of $\mathbf{M}$ equals the number of nonzero singular values.

Matrix completion exploits this low-rank structure by searching for the lowest-rank matrix consistent with the observed entries. Since rank minimization is NP-hard~\cite{Moitra18, fazel10}, a common relaxation is to minimize the nuclear norm—the sum of the singular values—which yields a convex program with strong recovery guarantees~\cite{Moitra18}. Let $\Omega \subseteq \{ 1, \cdots , n_1 \} \times \{ 1, \cdots n_2 \}$ be the set of indices of the known entries, the convex optimization problem for the exact MC has the following form
\begin{align}\label{eq:exact_nuclear_norm}
&\left| \begin{aligned} 
&\mathrm{minimize}&& \|\mathbf{X}\|_{*}
\\
&\mathrm{s.t.} && 
\mathcal{R}_{\Omega}(\mathbf{X}-\mathbf{M}) &= 0, \\
\end{aligned} \right. 
\end{align}
\noindent
where $\|\mathbf{X}\|_{*}$ denotes nuclear norm of the decision variable $\mathbf{X}$, $\mathcal{R}_{\Omega} : \mathbb{R}^{n_1 \times n_2} \rightarrow  \mathbb{R}^{n_1 \times n_2}$ sets all matrix elements to $0$, except those in the $\Omega$ set, e.g.

\begin{align}\label{eq:sampling_operator}
\mathcal{R}_{\Omega}(\mathbf{X}) = \sum_{(i, j) \in \Omega} x_{ij} \mathbf{e}_i \mathbf{e}_j^T, 
\end{align}
\noindent
where $\mathbf{e}_i$ denotes standard basis vector with $1$ on the $i$-th coordinate and $0$ elsewhere. Nuclear norm minimization is a widely adopted technique for low-rank matrix completion, known for its theoretical guarantees and strong empirical performance, which contribute to the method’s transparency and interpretability \cite{Recht09, Chen18, Moitra18}. Under suitable conditions, it can accurately recover the underlying matrix from a relatively small number of observed entries, on the order of 
$\mathcal{O}(nr\ln^2{n})$, where $n=n_1+n_2$. The optimization problem can be solved in polynomial time in the matrix dimension using off-the-shelf semidefinite programming (SDP) solvers or tailored first-order methods \cite{fazel10, Chen18, Moitra18}. However, for large-scale problems, solving these SDP programs remains computationally challenging due to high memory requirements \cite{Chen18}.

This paper introduces a novel method for the low-rank matrix completion problem, called Columns Selected Matrix Completion (CSMC). CSMC addresses the efficiency limitations of nuclear norm minimization while maintaining its theoretical guarantees. Our approach integrates two areas of modern linear algebra: it first employs the Column Subset Selection (CSS) algorithm to reduce the size of the MC problem. In the second stage, the recovered and previously known entries are used to reconstruct $\mathbf{M}$ by solving a standard least squares problem \cite{Woodruff14}. CSMC is particularly effective when one dimension of $\mathbf{M}$ is much larger than the other, such as in the Netflix Prize scenario. To the best of our knowledge, this is the first application of CSS to improve the computational efficiency of matrix completion under the uniform sampling assumption. Prior approaches either require access to fully observed columns~\cite{Krishnamurthy14, Xu15} or modify the sampling scheme in ways that introduce additional computational overhead~\cite{Ward18}.

Theoretical analysis shows that CSMC guarantees exact recovery under standard incoherence conditions with high probability, requiring the same number of observed entries, 
$\mathcal{O}(nr\ln^2{n})$. Our experiments on both synthetic and real-world datasets demonstrate competitive performance compared to state-of-the-art baselines. Preliminary results were presented in a shorter conference version \cite{Krajewska24}. In contrast, this paper provides the first comprehensive theoretical analysis of the CSMC method, along with an in-depth empirical evaluation on real and synthetic data.

\subsection{Notation and abbreviations}
For a matrix $\mathbf{X} \in \mathbb{R}^{n_1 \times n_2}$ we denote by $\mathbf{x}_i$ its $i$-th column, and by $x_{ij}$ its $(i, j)$-th entry. Given the set of column indices $\mathcal{I} \in [n_2]$ we use  $\mathbf{X}_{:, \mathcal{I}}$ to denote the column submatrix.  We consider three types of matrix norms. The spectral norm of $\mathbf{X}$ is denoted by  $\| \mathbf{X} \|_2$. The Frobenius and the nuclear norm are denoted by $\| \mathbf{X} \|_F$ and $\| \mathbf{X} \|_{*}$ respectively. Table \ref{tab:notation} contains essential symbols and acronyms.
\begin{table}[h]
\caption{Notation and abbreviations}\label{tab:notation}%
\centering
\begin{tabular}{@{}cl@{}}
\toprule
\textbf{Notation} & \textbf{Description} \\
\midrule
$\Omega$ & set of indices of the observed entries \\
$\mathcal{R}_{\Omega}$ & sampling operator (\ref{eq:sampling_operator}) \\
$\mathcal{I}$ & column indices for the column submatrix $\mathbf{C}$ \\
$\mathbf{X}_{:, \mathcal{I}}$ & column submatrix of $\mathbf{X}$ indicated by $\mathcal{I}$ \\
$\mathbf{X}^{\dagger}$ & Moore-Penrose inverse of $\mathbf{X}$ \\
$\rho$ & percentage of sampled entries \\
$\alpha$ & percentage of sampled columns in the first stage \\
$r$ & rank of $\mathbf{M}$ \\
$\mu_0(\mathbf{M})$ & incoherence parameter of $\mathbf{M}$ \\
CSS & Column Subset Selection \\
CSMC & Columns Selected Matrix Completion \\
CSNN & Columns Selected Nuclear Norm Minimization \\
CSPGD & Columns Selected Proximal Gradient Descent \\
NN & Nuclear Norm Minimization \\
PGD & Proximal Gradient Descent \\
MF & Matrix Factorization \\
\bottomrule
\end{tabular}
\end{table}

\subsection{Outline of the paper}

The rest of the paper is organized as follows. Section~2 reviews related work and highlights our contributions. Section~3 introduces the proposed method and provides the formal analysis, including Theorem~\ref{thm:csmc_g}, which establishes a lower bound on the probability of perfect matrix reconstruction, together with conditions on the minimum number of randomly selected columns and the minimum number of observed entries required to achieve this bound. Section~4 presents experimental results on both synthetic and real datasets, confirming the effectiveness of the proposed approach. Section~5 contains proofs and supporting theorems, and Section~6 concludes the paper and outlines directions for future research.

\section{Related work and contribution}

In this section, we discuss existing approaches to matrix completion, focusing on their theoretical guarantees and efficiency in terms of the number of observed entries required for accurate recovery, commonly referred to as sample complexity. We also review the Column Subset Selection (CSS) problem and its applications to MC. Finally, we present our original contributions that integrate these two lines of research to enhance both computational efficiency and scalability.

\subsection{Low-rank matrix completion}

The low-rank assumption is grounded in the observation that the real-world datasets inherently possess, or can be effectively approximated by, a low-rank matrix structure \cite{Chen18}. It plays a pivotal role in enhancing the efficiency of algorithms. The algorithms for the low-rank matrix completion typically leverage convex or non-convex optimization in Euclidean spaces \cite{jain13, Recht09, Mazumder2010a, Candes08, Hardt2014}, or employ Riemannian optimization \cite{cambier2016robust, bian2024preconditioned}. The majority of research has focused on uniform or Bernoulli sampling models, which usually require the matrix to be incoherent (Definition~\ref{def:coherence}). Among these approaches, convex methods are the most developed and provide strong, well-understood theoretical guarantees, with a sample complexity of $\mathcal{O}(nr\ln^2 n)$ for incoherent matrices, where $n=n_1+n_2$. The nuclear norm relaxation draws inspiration from the success of compressed sensing \cite{Chen18, Moitra18, fazel10} in which $\ell_1$ norm minimization is used to recover sparse signal instead of  $\ell_0$ norm. The matrix rank may be expressed as the $\ell_0$ norm of the singular values vector $\mathbf{\sigma}$, while the nuclear norm is defined as its  $\ell_1$ norm.

To address noisy data and prevent model overfitting, Mazumder et al. \cite{Mazumder2010a} consider a regularized optimization problem,

\begin{align}\label{eq:lagrange}
  \underset{\mathbf{X} \in \mathbb{R}^{n_1 \times n_2}}{\mathrm{minimize\ }} \frac{1}{2}  \| \mathcal{R}_{\Omega}(\mathbf{X}) - \mathcal{R}_{\Omega}(\mathbf{M})\|_{F} ^2 + \lambda \|\mathbf{X}\|_{*},
\end{align}
\noindent
where $\lambda$ is a regularization paramater. The problem (\ref{eq:lagrange}) benefits from the closed form of the proximal operator for the objective function given by Singular Value Thresholding (SVT) and is solved with Proximal Gradient descent (PGD) \cite{Mazumder2010a}. However, PGD requires calculation of SVD at each iteration, making it computationally expensive.  

Another class of convex optimization techniques focuses on minimizing the Frobenius norm error for the observed entries while applying constraints on the nuclear norm. It can be solved using the Frank-Wolfe algorithm \cite{Jaggi13, Jing23}. Despite its low computational cost per iteration, the Frank-Wolfe algorithm exhibits slow convergence rates. Consequently, numerous efforts have been undertaken to develop optimized variants of the algorithm \cite{Yurtsever17}.

Non-convex optimization methods are often used in practice because they are faster than nuclear norm minimization. These methods aim to minimize the least squares error on the observed entries while constraining the rank of the solution using Matrix Factorization (MF) \cite{Chen18, tong2021accelerating, jain14, li2024novel},

\begin{align}\label{eq:burer_monteiro}
\underset{\mathbf{L} \in \mathbb{R}^{n_1 \times k}, \mathbf{R} \in \mathbb{R}^{n_2 \times k}}{\mathrm{minimize}} \frac{1}{2}\| \mathcal{R}_{\Omega}(\mathbf{M}) - \mathcal{R}_{\Omega}(\mathbf{L}\mathbf{R}^T)\|^2_{F}.
\end{align}

This formulation requires specifying the parameter $k$, which must be tuned since the true rank of $\mathbf{M}$ is unknown. Chen and Chi \cite{Chen18} categorize three classes of methods for solving (\ref{eq:burer_monteiro}): Alternating Minimization, Gradient Descent, and Singular Value Projection (SVP) \cite{jain10, jain14}. For Alternating Minimization, Jain et al. \cite{jain13} and Hardt \cite{Hardt2014} provided provable guarantees. However, these guarantees are weaker in terms of the matrix incoherence, rank, and condition number of $\mathbf{M}$. Among gradient descent methods, Scaled Gradient Descent (ScaledGD) achieves the best sampling complexity, $\mathcal{O}(r^2 n \ln n)$ for incoherent matrices, which is worse than nuclear norm minimization by a factor of $r$                     \cite{tong2021accelerating}. The sampling complexity of SVP is even higher, $\mathcal{O}(r^5 n \ln^3 n)$ for incoherent matrices \cite{jain10, jain14}.

\subsection{Column Subset Selection}

Many commonly employed low-rank models, such as Principal Component Analysis (PCA), Singular Value Decomposition (SVD), and Nonnegative Matrix Factorization (NMF) lack clear interpretability, leading to research on alternatives like Column Subset Selection (CSS) and CUR decomposition \cite{ sorensen2016deim, voronin2017efficient}.
CSS provides low-rank representation using the column submatrix of the data matrix $\mathbf{M} \in \mathbb{R}^{n_1 \times n_2}$. Formally, given a matrix $\mathbf{M} \in \mathbb{R}^{n_1 \times n_2}$ and a positive integer $d$, CSS seeks for the column submatrix $\mathbf{C} \in \mathbb{R}^{n_1 \times d}$ such that

\begin{align}
\| \mathbf{M} - P_{C}(\mathbf{M}) \|_{\xi}
\end{align}
\noindent
is minimized over all possible $\binom{n_2}{d}$ choices for the matrix $\mathbf{C}$. Parameter $\xi \in \{2, F\}$, where  $\xi= 2$ denotes the spectral and $\xi= F$ the Frobenius norm. Here,
\begin{align}
P_{C} = \mathbf{C}\mathbf{C}^{\dagger},
\end{align}
\noindent
is the projection onto the $d$-dimensional space spanned by the columns $\mathbf{C}$ and $\mathbf{C}^{\dagger}$ denotes the Moore-Penrose pseudoinverse. On the other hand, CUR decomposition explicitly selects a subset of columns and rows from the original matrix to construct the factorization \cite{sorensen2016deim}. 

Extensive research focuses on fully-observed matrices, which may not be practical for real-world applications \cite{sorensen2016deim, voronin2017efficient}. Several works provide comprehensive overviews of CUR decomposition methods. For instance, \cite{hamm2020perspectives} offers a detailed characterization of various forms of CUR, while \cite{hamm2020stability} reviews different sampling strategies employed to construct CUR decompositions. Bridging the research gap between CSS, CUR and MC is a recently recognized challenge  \cite{Wang17, Ward18, Krishnamurthy14}. In this context, two important questions emerge. The first one is about selecting meaningful columns (or rows) from partially observed data and applying CSS and CUR in MC algorithms. Our method benefits from the fact that under standard assumptions about matrix incoherence, a high-quality column subset can be obtained by sampling each column according to the uniform distribution \cite{Xu15}. On the other hand, in the case of the coherent matrices, Wang and Singh \cite{Wang17} propose active sampling to dynamically adjust column selection during the sampling process, albeit with increased computational complexity.

The application of CUR and CSS in MC tasks typically involves altering the sampling scheme to relax assumptions about the matrix or reduce sample complexity \cite{Krishnamurthy14, Ward18, Cai22cur, Xu15}. To the best of our knowledge, our approach is the first to apply CSS for reducing memory usage and runtime in nuclear norm minimization–based MC.

Eftekhari et al. \cite{Ward18} introduce the MC2 model, which begins by sampling part of the matrix entries uniformly at random. Using these samples, the row and column leverage scores are estimated, and an additional set of entries is then sampled according to the estimated scores. The full matrix is subsequently recovered through nuclear norm minimization. For incoherent matrices, MC2 achieves the same sampling complexity as nuclear norm minimization when the entries are uniformly sampled. For certain classes of coherent matrices, it reduces the number of observed entries needed for exact recovery. Despite this advantage, MC2 has the same time and space complexity as nuclear norm minimization and incurs the extra cost of estimating leverage scores.

Krishnamurthy and Singh \cite{Krishnamurthy14} propose a fast MC algorithm which improves the sample complexity of nuclear norm minimization by avoiding the assumption on the row-space coherence. It leverages adaptive column sampling: based on partial observations and the estimated column space, the algorithm determines whether a column should be fully observed or approximated. Nevertheless, the approach still requires some columns to be fully observed, which can be impractical in real-world settings—for instance, in recommendation systems, where users are unlikely to rate all items.

Cai et al. \cite{Cai22cur} showcase the potential of CUR in low-rank matrix completion, introducing a novel sampling strategy called Cross-Concentrating Sampling (CCS) and a fast non-convex Iterative CUR Completion (ICURC) algorithm. Still, CCS requires $\mathcal{O}(r^2n\ln^2(n))$ observed entries, where $n =\max\{ n_1, n_2 \}$, which is a factor of $r$ worse than the nuclear norm minimization-based approach. The optimization problem solved by ICURC imposes constraints on the rank of the matrix, thus introducing an additional parameter that requires tuning.

Our approach is closely related to the work of Xu et al., who proposed CUR+, an algorithm for matrix completion from partial observations \cite{Xu15}. CUR+ constructs a low-rank approximation using (i) randomly sampled rows and columns and (ii) a subset of observed entries, and it guarantees recovery of incoherent matrices from $\mathcal{O}(rn \ln r)$ samples. However, CUR+ assumes that the sampled row and column submatrices are fully observed, which is impractical in many standard real-world applications. To enable a fair comparison, we adapted it into a two-stage method (CUR+Nuc): first, completing the sampled submatrices with nuclear norm minimization, and then applying CUR+ regression. In Experiment S~III, we implemented this adaptation and compared it with our method. In contrast to CUR+ and CUR+Nuc, CSMC requires filling only the column submatrix, avoids computing singular vectors, and does not require specifying the rank. These differences remove both the sensitivity of CUR+ to the choice of $k$ and the cost of eigenvector computations.

\subsection{Our contributions}

This paper addresses the challenge of completing rectangular matrices by integrating principles from Column Subset Selection and Low-rank Matrix Completion. The proposed approach solves convex optimization problems while enhancing computational efficiency and scalability. Our main contributions are as follows:

\begin{enumerate}
    \item We introduce a two-stage matrix completion method, CSMC, designed for recovering rectangular matrices with highly unbalanced dimensions ($n_1 \ll n_2$). This structure is particularly suited for large-scale, asymmetric data scenarios common in multi-source data integration. To the best of our knowledge, this is the first application of Column Subset Selection to improve the runtime efficiency of convex matrix completion methods. 
    \item We provide a formal analysis, including Theorem~\ref{thm:csmc_g}, which establishes a lower bound on the probability of exact matrix reconstruction in the second stage of CSMC. The theorem specifies necessary conditions on the number of randomly selected columns ($d = \mathcal{O}(r \ln r)$) and on the sample size of observed entries ($|\Omega| = \mathcal{O}(n_2 r \ln (n_2 r))$) required for accurate recovery of an incoherent rank-$r$ matrix. Moreover, we show that under uniform sampling, the sample complexity of CSMC matches that of nuclear norm minimization, i.e., $\mathcal{O}(r(n_1+n_2)\ln^2(n_1+n_2))$ for incoherent matrix.
    \item We compare CSMC with prior methods that use CSS, both theoretically and experimentally. These differences make CSMC more practical for sparse real-world settings (e.g., recommendation systems) and more efficient in terms of runtime and memory usage.

    \item We develop two algorithms dedicated to problems of different sizes:  Columns Selected Nuclear Norm (CSNN) and Columns Selected Proximal Gradient Descent (CSPGD).
    \item We validate CSMC through numerical experiments on both synthetic and real-world datasets, benchmarking its performance against established matrix completion methods in recommendation systems and image inpainting. Our results demonstrate that CSMC achieves reconstruction quality on par with convex nuclear norm minimization methods, while attaining runtimes comparable to state-of-the-art non-convex approaches.
\end{enumerate}

\section{Columns Selected Matrix Completion (CSMC)}

\subsection{Method overview}
In the CSMC, the completion of the matrix  $\mathbf{M} \in \mathbb{R}^{n_1 \times n_2}$ is divided into two stages. In the first stage, CSMC selects the subset of $d$ columns and fills it with an established MC algorithm. In the second stage, the least squares problem is solved. Each of the presented variants of the CSMC methods adopts the following procedure (Fig. \ref{fig:CSMC}).

\begin{description}
  \item[Stage I: Sample and fill] \hfill \\ CSMC selects $d$ columns of $\mathbf{M} \in \mathbb{R}^{n_1 \times n_2}$ according to the uniform distribution over the set of all indices. Let  $I \subseteq \{1, \cdots, n_2\}$ denote the set of the indices of the selected columns. The submatrix formed by selected columns, $\mathbf{C}:= \mathbf{M}_{:I}$, $\mathbf{C} \in \mathbb{R}^{n_1 \times d}$ is recovered using the chosen matrix completion algorithm.
  \item[Stage II: Solve least squares] \hfill \\ Let  $\mathbf{\hat{C}} \in \mathbb{R}^{n_1 \times d}$ be the output of the Stage I. The CSMC solves the following convex optimization problem,

    \begin{align}\label{eq:randomized_nuclear_norm2}
  \underset{\mathbf{Z} \in \mathbb{R}^{d \times n_2}}{\mathrm{minimize}} \frac{1}{2} \|\mathcal{R}_{\Omega}(\mathbf{M}) - \mathcal{R}_{\Omega}(\mathbf{\hat{C}} \mathbf{Z}) \|_{F}^2,
\end{align}
\noindent
  where $\mathcal{R}_{\Omega} :\mathbb{R}^{n_1 \times n_2} \rightarrow \mathbb{R}^{n_1 \times n_2}$ is the sampling operator (\ref{eq:sampling_operator}). Let $\mathbf{\hat{Z}} \in \mathbb{R}^{d \times n_2}$ be a solution to the problem (\ref{eq:randomized_nuclear_norm2}). The CSMC output with the matrix $\mathbf{\hat{M}} = \mathbf{\hat{C}} \mathbf{\hat{Z}}$.
\end{description}

\begin{figure}[H]
     \centering
    \includegraphics[width=0.8\textwidth]{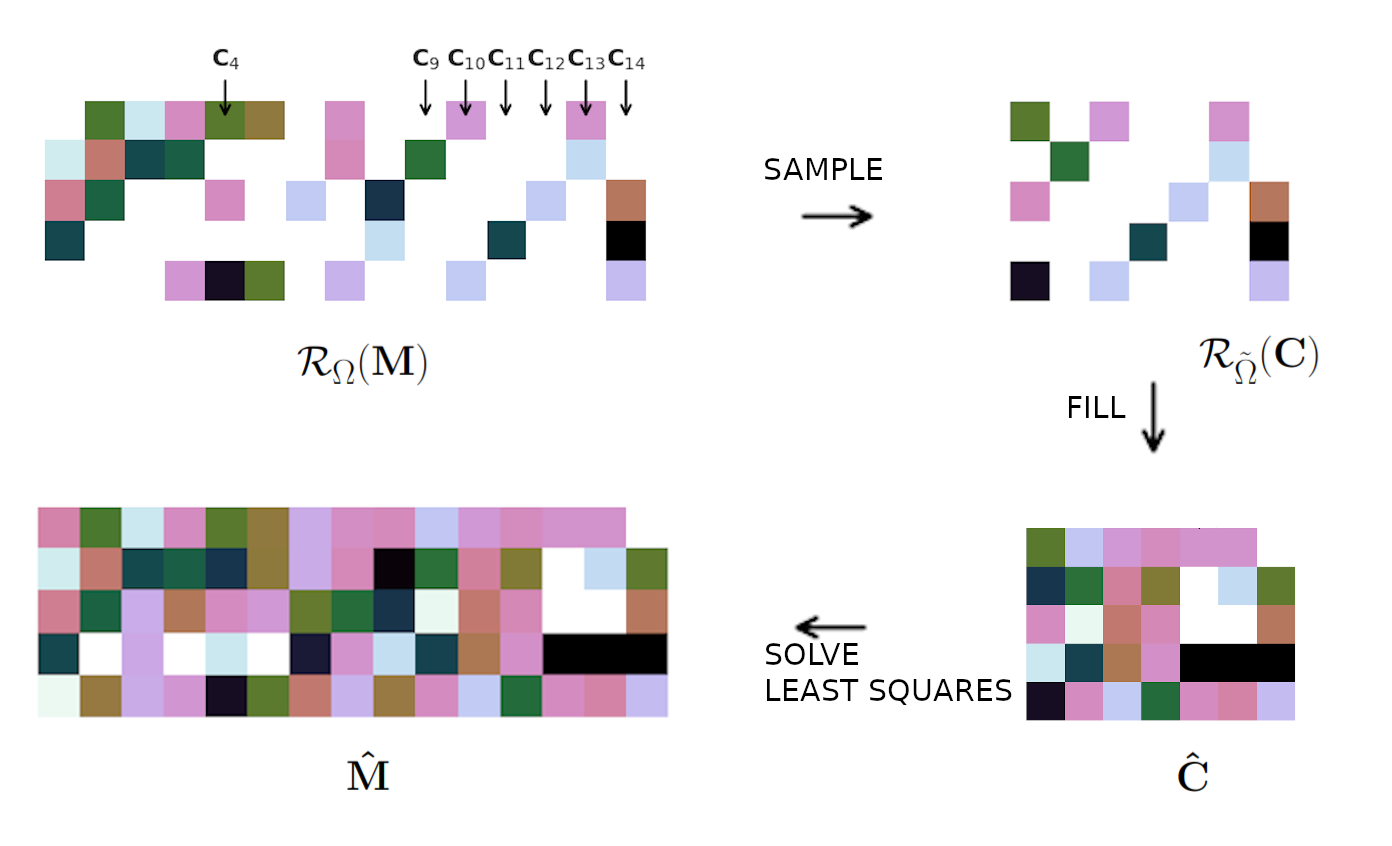}
        \caption[CSMC method.] {The CSMC method overview.}
        \label{fig:CSMC}
\end{figure}

\begin{algorithm}
\caption{CSMC-$\alpha$}
\begin{algorithmic}
\State Require: $\mathcal{R}_{\Omega}(\mathbf{M})$: $n_1 \times n_2$-matrix
\State Require: $\alpha$: Ratio of the selected columns
\State $I \leftarrow \text{Uniformly sample } \alpha \cdot n_2$ indices
\State $\mathbf{C}_{\text{missing}} \leftarrow \mathcal{R}_{\Omega}(\mathbf{M})_{: I}$ 
\State $\mathbf{\hat{C}} \leftarrow \text{MC}(\mathbf{C}_{\text{missing}})$ \Comment{Complete submatrix $\mathbf{C} \in \mathbb{R}^{n_1 \times d}$.}
\State $\mathbf{\hat{Z}} \leftarrow \text{arg} \min \limits_{\mathbf{Z} \in \mathbb{R}^{n_2}} \frac{1}{2} \|\mathcal{R}_{\Omega}(\mathbf{M}) - \mathcal{R}_{\Omega}(\mathbf{\hat{C}} \mathbf{Z}) \|_{F}^2$ 
\State $\mathbf{\hat{M}} \leftarrow \mathbf{\hat{C}} \mathbf{\hat{Z}}$
\State\Return $\mathbf{\hat{M}} $
\end{algorithmic}
\label{alg:csmc}
\end{algorithm}

\subsection{Theoretical results}

In their seminal work, Candes and Recht proved that, under certain assumptions, solving nuclear norm minimization leads to successful matrix recovery \cite{Candes08}. There has been a significant amount of research extending these results to noisy settings, improved bounds on the sample complexity and analyzing the robustness of the nuclear norm minimization algorithms to noise and outliers \cite{Recht09, Candes08, Moitra18}. Recht \cite{Recht09} greatly simplified the analysis. The theoretical guarantees rely on properties such as matrix rank, number and distribution of observed entries and the fact that the singular vectors of the matrix are uncorrelated with the standard basis. The last property is formalized with the matrix incoherence parameter and defined as follows  \cite{Candes08}.

\begin{definition}\label{def:coherence}
 Let $U$ be a subspace of $\mathbb{R}^n$ of dimension $r_U$ and $P_U$ be the orthogonal projection onto $U$. The incoherence parameter of $U$ is defined as  
 
\begin{align}
    \mu(U) = \frac{n}{r_U}\max_{1 \leq i\leq n }||P_U\mathbf{e}_i||_2^2,
\end{align}
\noindent
where $\mathbf{e}_i$ for $i= 1, \ldots n$ are the standard basis vectors of $\mathbb{R}^n$.

The incoherence parameter of the  rank-$\tilde{r}$  matrix $\mathbf{X}$ is given by
$\mu_0(\mathbf{X}) = \max \{ \mu(U), \mu(V)  \}$ where $U$ and $V$ are the linear spaces spanned by its singular vectors.
\end{definition}
Fig.~\ref{fig:three graphs} illustrates one-dimensional subspaces $U \subset \mathbb{R}^2$ with different incoherence parameter values. The incoherence parameter satisfies the bounds $1 \leq \mu(U) \leq \tfrac{n}{r_U}$ \cite{Chen18}. For a matrix $\mathbf{X}$, we say it has low incoherence if $\mu_0(\mathbf{X}) = \mathcal{O}(1)$, meaning that its singular vectors are spread out rather than concentrated on a few coordinates. In recommendation systems, this corresponds to variance being distributed across many users and items. Conversely, a high incoherence value $\mu_0(\mathbf{X})$ means that some singular vector is nearly aligned with a standard basis vector, so the variance is dominated by a single user (row) or a single item (column).

\begin{figure*}
     \centering
     \begin{subfigure}[b]{0.3\textwidth}
         \centering
         \includegraphics[width=\textwidth]{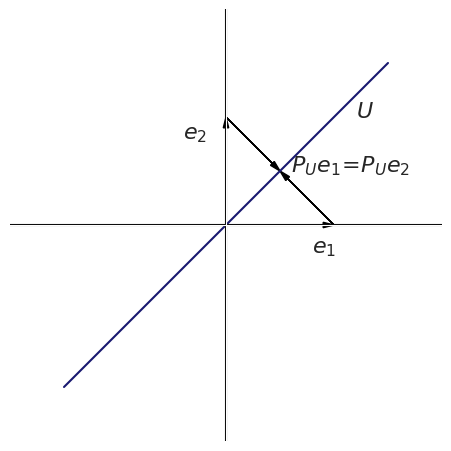}
         \caption{$\mu(U)=1$.}
         \label{fig:incoherent}
     \end{subfigure}
     \hfill
     \begin{subfigure}[b]{0.3\textwidth}
         \centering
         \includegraphics[width=\textwidth]{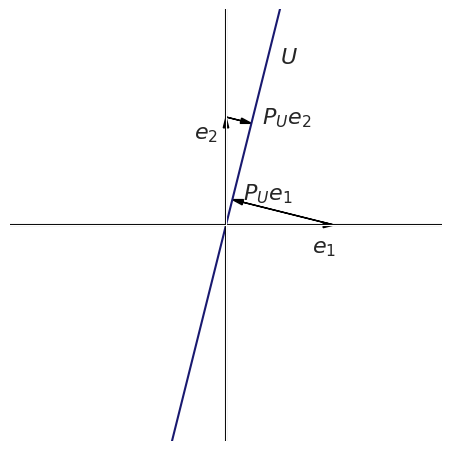}
         \caption{$1 < \mu(U) < 2$}
         \label{fig:medium_coherent}
     \end{subfigure}
     \hfill
     \begin{subfigure}[b]{0.3\textwidth}
         \centering
         \includegraphics[width=\textwidth]{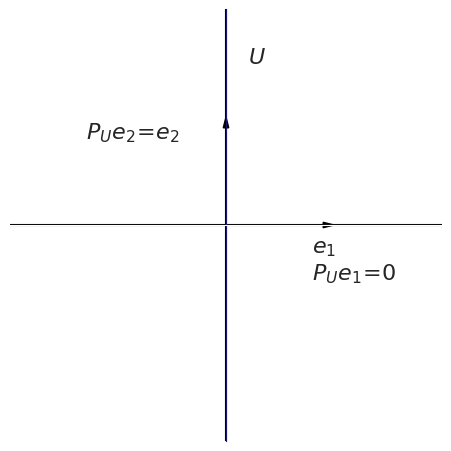}
         \caption{$\mu(U)=2$}
         \label{fig:coherent}
     \end{subfigure}
        \caption[The incoherence parameter of the one-dimensional subspace $U \subset \mathbb{R}^2$.] {Incoherence of the one-dimensional subspace $U \subset \mathbb{R}^2$: Fig. \ref{fig:incoherent} depicts $U$, spanned by $(\frac{1}{\sqrt{2}}, \frac{1}{\sqrt{2}})$, with the smallest incoherence parameter $\mu(U)=1$. Fig. \ref{fig:medium_coherent} illustrates the case where $1 < \mu(U) < 2$, while Fig. \ref{fig:coherent} presents $U$, spanned by one of the standard basis vectors $\mathbf{e}_2$, which achieves maximal incoherence parameter $\mu(U)=2$.}
        \label{fig:three graphs}
\end{figure*}

We assume that observed entries are sampled uniformly at random. We begin with a discussion on how the crucial characteristics of the matrix $\mathbf{M}$ completion problem are transferred to the task of filling in the submatrix $\mathbf{C}$.  Since $\mathbf{C}$ consists of randomly selected $d$ columns of $\mathbf{M}$, the rank of $\mathbf{C}$ is bounded by a minimum of $r$ and $d$, and the entries of $\mathbf{C}$ also follow a uniform distribution.  The following theorem demonstrates that for an incoherent, well-conditioned matrix $\mathbf{M}$, if the submatrix $\mathbf{C}$ is composed of $\mathcal{O}(r \ln(rn_2))$ sampled columns, it will exhibit low coherence.

\begin{theorem}[Corollary 3.6 in Cai et al. \cite{Cai21r}]\label{thm:cai}
Suppose that  $\mathbf{M} \in \mathbb{R}^{n_1 \times n_2}$ has rank $r$ and incoherence paramater bounded by $\mu_0(\mathbf{M})$. Suppose that $I \	\subseteq \{1, \ldots n_2 \}$ is chosen by sampling uniformly without replacement to yield $\mathbf{C} = \mathbf{M}_{:I}$. Let $d$ be the number of sampled columns such that $d \geq 1.06\mu_0(\mathbf{M})r \ln(rn_2)$. Then 
\begin{align}
\mu_0(\mathbf{C}) \leq 100 \kappa^2({\mathbf{M}}) \mu_0({\mathbf{M}}),
\end{align}
with a probability at least $1 - \frac{1}{n_2}$, where $\kappa(\mathbf{M})$ denotes the spectral condition number of $\mathbf{M}$, given by $\kappa(\mathbf{M}) := \| \mathbf{M} \|_2 \| \mathbf{M}^{\dagger} \|_2$.
\end{theorem}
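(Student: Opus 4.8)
The statement is quoted from \cite{Cai21r}, but here is how one would establish it. The plan is to work with the thin SVD $\mathbf{M} = \mathbf{U}\mathbf{\Sigma}\mathbf{V}^T$, $\mathbf{U} \in \mathbb{R}^{n_1 \times r}$, $\mathbf{V} \in \mathbb{R}^{n_2 \times r}$ with orthonormal columns, and to note that restricting $\mathbf{M}$ to the columns in $I$ is the same as restricting $\mathbf{V}$ to the rows in $I$: $\mathbf{C} = \mathbf{M}_{:I} = \mathbf{U}\mathbf{\Sigma}\,(\mathbf{V}_{I,:})^T$. Since $\mu_0(\mathbf{C})$ is the maximum of the coherences of the column space and the row space of $\mathbf{C}$, I would treat these two pieces separately. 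The column space of $\mathbf{C}$ is contained in the column space of $\mathbf{U}$, and once $\mathbf{V}_{I,:}$ has full column rank $r$ (which the estimate below guarantees on the good event) the two coincide; hence $\mu(\mathrm{col}\,\mathbf{C}) = \mu(U) \le \mu_0(\mathbf{M})$, already within the claimed bound. All the effort goes into the row space.

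For the row space I would use $P_{\mathrm{row}\,\mathbf{C}} = \mathbf{C}^{\dagger}\mathbf{C}$ and bound, for each selected index $i$,
\begin{align}
\|P_{\mathrm{row}\,\mathbf{C}}\,\mathbf{e}_i\|_2 = \|\mathbf{C}^{\dagger}\mathbf{c}_i\|_2 \le \|\mathbf{C}^{\dagger}\|_2\,\|\mathbf{c}_i\|_2,
\end{align}
where $\mathbf{c}_i = \mathbf{U}\mathbf{\Sigma}\mathbf{v}^{(i)}$ is the $i$-th column of $\mathbf{C}$ and $\mathbf{v}^{(i)} \in \mathbb{R}^{r}$ is the corresponding row of $\mathbf{V}$. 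Then $\|\mathbf{c}_i\|_2 \le \|\mathbf{M}\|_2\,\|\mathbf{v}^{(i)}\|_2$, and by Definition~\ref{def:coherence} applied to the right-singular subspace $V$ of $\mathbf{M}$ one has $\|\mathbf{v}^{(i)}\|_2^2 \le r\mu_0(\mathbf{M})/n_2$. For the other factor, $\|\mathbf{C}^{\dagger}\|_2 = \sigma_r(\mathbf{C})^{-1}$ and, because $\mathbf{U}$ has orthonormal columns and $\mathbf{\Sigma}$ is square invertible, $\sigma_r(\mathbf{C}) \ge \sigma_r(\mathbf{M})\,\sigma_r(\mathbf{V}_{I,:})$, so $\|\mathbf{C}^{\dagger}\|_2^2 \le \|\mathbf{M}^{\dagger}\|_2^2\,\lambda_{\min}(\mathbf{V}_{I,:}^T\mathbf{V}_{I,:})^{-1}$. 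Multiplying these and by $d/r$,
\begin{align}
\mu(\mathrm{row}\,\mathbf{C}) = \frac{d}{r}\max_{i}\|P_{\mathrm{row}\,\mathbf{C}}\,\mathbf{e}_i\|_2^2 \le \kappa^2(\mathbf{M})\,\mu_0(\mathbf{M})\,\frac{d}{n_2}\,\lambda_{\min}(\mathbf{V}_{I,:}^T\mathbf{V}_{I,:})^{-1},
\end{align}
so everything reduces to a lower bound on $\lambda_{\min}(\mathbf{V}_{I,:}^T\mathbf{V}_{I,:})$.

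This last step is the only genuinely nontrivial one, and it is a matrix concentration estimate — the expected main obstacle. Writing $\mathbf{V}_{I,:}^T\mathbf{V}_{I,:} = \sum_{i\in I}\mathbf{v}^{(i)}(\mathbf{v}^{(i)})^T$ exhibits it as a sum of $d$ rank-one PSD matrices obtained by sampling rows of $\mathbf{V}$ uniformly \emph{without replacement}; its expectation is $\tfrac{d}{n_2}\mathbf{I}_r$ (using $\mathbf{V}^T\mathbf{V} = \mathbf{I}_r$) and each summand has norm at most $B := r\mu_0(\mathbf{M})/n_2$. I would invoke the matrix Chernoff lower-tail bound in its sampling-without-replacement form (which is no weaker than the i.i.d.\ case): for $\delta \in (0,1)$,
\begin{align}
\Pr\!\left[\lambda_{\min}(\mathbf{V}_{I,:}^T\mathbf{V}_{I,:}) \le (1-\delta)\tfrac{d}{n_2}\right] \le r\left[\frac{e^{-\delta}}{(1-\delta)^{1-\delta}}\right]^{d/(r\mu_0(\mathbf{M}))}.
\end{align}
Taking $1-\delta = 1/100$ makes the logarithm of the bracket about $-0.944$, so the hypothesis $d \ge 1.06\,\mu_0(\mathbf{M})r\ln(rn_2)$ drives the right-hand side below $r\cdot(rn_2)^{-1} = 1/n_2$. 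On the complementary event one has $\lambda_{\min}(\mathbf{V}_{I,:}^T\mathbf{V}_{I,:}) \ge \tfrac{1}{100}\tfrac{d}{n_2}$ — in particular $\mathbf{V}_{I,:}$ has rank $r$, which closes the loophole in the column-space step — and the preceding display gives $\mu(\mathrm{row}\,\mathbf{C}) \le 100\,\kappa^2(\mathbf{M})\mu_0(\mathbf{M})$; combined with $\mu(\mathrm{col}\,\mathbf{C}) \le \mu_0(\mathbf{M})$ this yields $\mu_0(\mathbf{C}) \le 100\,\kappa^2(\mathbf{M})\mu_0(\mathbf{M})$. The remaining care is to quote the correct without-replacement version of matrix Chernoff with exponent $d/(r\mu_0(\mathbf{M}))$ and to check that the choice $1-\delta = 1/100$ simultaneously produces the leading constant $1.06$ and the factor $100$; the $\kappa^2(\mathbf{M})$ is precisely the price of bounding $\|\mathbf{C}^{\dagger}\|_2$ and $\|\mathbf{c}_i\|_2$ separately through $\sigma_1(\mathbf{M})$ and $\sigma_r(\mathbf{M})$ rather than treating $\mathbf{U}\mathbf{\Sigma}$ jointly.
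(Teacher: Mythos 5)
The paper does not prove this statement; it is imported verbatim as Corollary 3.6 of Cai et al.\ \cite{Cai21r}, so there is no in-paper argument to compare against. Your reconstruction, however, is correct and complete. The decomposition into column-space and row-space coherence is the right one: the column space of $\mathbf{C}$ coincides with that of $\mathbf{M}$ once $\mathbf{V}_{I,:}$ has rank $r$, so only the row space costs anything. Your chain $\|P_{\mathrm{row}\,\mathbf{C}}\mathbf{e}_i\|_2 \le \|\mathbf{C}^{\dagger}\|_2\|\mathbf{c}_i\|_2$, with $\|\mathbf{c}_i\|_2 \le \|\mathbf{M}\|_2\sqrt{r\mu_0(\mathbf{M})/n_2}$ and $\sigma_r(\mathbf{C}) \ge \sigma_r(\mathbf{M})\,\sigma_r(\mathbf{V}_{I,:})$ (valid since $\mathbf{U}$ has orthonormal columns and $\mathbf{\Sigma}$ is invertible), correctly isolates $\lambda_{\min}(\mathbf{V}_{I,:}^T\mathbf{V}_{I,:})$ as the only random quantity and explains where $\kappa^2(\mathbf{M})$ enters. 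The concentration step is also right: the summands $\mathbf{v}^{(i)}(\mathbf{v}^{(i)})^T$ have spectral norm at most $B = r\mu_0(\mathbf{M})/n_2$, the expected sum is $\tfrac{d}{n_2}\mathbf{I}_r$, and the without-replacement matrix Chernoff bound (exactly the Tropp result the paper itself restates as Theorem \ref{thrm:tropp}) applies. The constants check out: with $1-\delta = 1/100$ the exponent rate is $\delta + (1-\delta)\ln(1-\delta) \approx 0.944$, and $0.944 \times 1.06 > 1$, so $d \ge 1.06\,\mu_0(\mathbf{M})r\ln(rn_2)$ forces the failure probability below $r\,(rn_2)^{-1} = 1/n_2$, and the surviving event gives $\tfrac{d}{n_2}\lambda_{\min}^{-1} \le 100$, hence the factor $100$. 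The one point a careful reader should note — that $\mu(\mathrm{row}\,\mathbf{C})$ is computed with ambient dimension $d$ and subspace dimension $r$, which presupposes $\mathrm{rank}(\mathbf{C}) = r$ — you close explicitly by observing that the good Chernoff event forces $\mathbf{V}_{I,:}$ to have full column rank. I see no gap.
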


In this section, the primary focus lies on assessing the recovery capabilities of the least-squares problem during the second stage of the CSMC. In particular, we assume that submatrix $\mathbf{C}$ is fully observed or perfectly recovered and provide assumptions that solving

\begin{align}\label{eq:cs_incomplete}
\underset{\mathbf{Z} \in \mathbb{R}^{d \times n_2}}{\mathrm{minimize}} \frac{1}{2} \|\mathcal{R}_{\Omega}(\mathbf{M}) - \mathcal{R}_{\Omega}(\mathbf{C} \mathbf{Z}) \|_{F}^2
\end{align}
\noindent
will output $\mathbf{\hat{Z}} \in \mathbb{R}^{d \times n_2}$, such that $\mathbf{M} = \mathbf{C}\mathbf{\hat{Z}}$ holds with high probability. While similar to the CUR+ framework introduced in \cite{Xu15}, our approach to solving (\ref{eq:cs_incomplete}) avoids the need to compute singular vectors of $\mathbf{C}$. The main result of this section is given by the Theorem \ref{thm:csmc_g}.  We believe that the following main result broadens the result of Xu et al. formulated as Theorem 2 in \cite{Xu15}. The proof of Theorem \ref{thm:csmc_g}, accompanied by the development of two additional theorems, which are our original contributions, are provided in Section V.

\begin{theorem}\label{thm:csmc_g}
Let $r$ be the rank of $\mathbf{M}\in \mathbb{R}^{n_1 \times n_2}$ and let $\mathbf{C} \in \mathbb{R}^{n_1 \times d}$ be a column submatrix of $\mathbf{M}$ formed by uniformly sampled without replacement $d$ columns. Let $\tilde{r}$ denote the rank of $\mathbf{C}$. Assume that for the parameter 
$\gamma > 0$,

\begin{enumerate}
\item $d \geq 7 \mu_0(\mathbf{M}) r(\gamma+\ln r)$,
\item $ |\Omega | \geq \tilde{r} n_2 \mu_0(\mathbf{C})\left(\gamma+\ln{(\frac{n_2\tilde{r}}{2})}\right)$.
\end{enumerate}
Let $\mathbf{\hat{Z}} \in \mathbb{R}^{d \times n_2}$ be the minimizer of the problem \ref{eq:cs_incomplete}. Then, with a probability at least $1-3e^{-\gamma}$, we have $\mathbf{M}=\mathbf{C}\mathbf{\hat{Z}}$.
\end{theorem}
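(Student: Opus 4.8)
\emph{Proof proposal.} The plan is to reduce the claim to two finite-dimensional linear-algebra facts, each established by a matrix Chernoff bound. Write the compact SVD $\mathbf{M} = \mathbf{U}\mathbf{\Sigma}\mathbf{V}^T$ with $\mathbf{V} \in \mathbb{R}^{n_2 \times r}$ having orthonormal columns, so that $\mathbf{C} = \mathbf{M}_{:,I} = (\mathbf{U}\mathbf{\Sigma})(\mathbf{V}_{I,:})^T$ and, since $\mathbf{U}\mathbf{\Sigma}$ has full column rank, $\mathrm{rank}(\mathbf{C}) = \mathrm{rank}(\mathbf{V}_{I,:})$. The first step is to show that assumption (1) forces $\tilde r = \mathrm{rank}(\mathbf{C}) = r$ — equivalently $\mathrm{col}(\mathbf{C}) = \mathrm{col}(\mathbf{M})$ — with probability at least $1 - e^{-\gamma}$. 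Indeed $\mathbf{V}_{I,:}^T\mathbf{V}_{I,:} = \sum_{i \in I}\mathbf{v}_i\mathbf{v}_i^T$, where $\mathbf{v}_i$ is the $i$-th row of $\mathbf{V}$; the coherence bound gives $\|\mathbf{v}_i\|_2^2 \leq \mu_0(\mathbf{M})r/n_2$, and under uniform sampling without replacement $\mathbb{E}\big[\sum_{i\in I}\mathbf{v}_i\mathbf{v}_i^T\big] = (d/n_2)\mathbf{I}_r$. A matrix Chernoff lower-tail bound (with the classical reduction from sampling without replacement to the Bernoulli case) then yields $\mathbb{P}[\lambda_{\min}(\mathbf{V}_{I,:}^T\mathbf{V}_{I,:}) = 0] \leq r\,e^{-d/(\mu_0(\mathbf{M})r)} \leq e^{-\gamma}$ by assumption (1). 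On this event there exists $\mathbf{W} \in \mathbb{R}^{d \times n_2}$ with $\mathbf{M} = \mathbf{C}\mathbf{W}$, e.g.\ $\mathbf{W} = \mathbf{C}^{\dagger}\mathbf{M}$.

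Next I would use that problem (\ref{eq:cs_incomplete}) decouples across the columns of $\mathbf{Z}$: writing $\Omega_j = \{i : (i,j)\in\Omega\}$ and $\mathbf{C}_{\Omega_j,:}$ for the corresponding row submatrix, the $j$-th subproblem is $\min_{\mathbf{z}\in\mathbb{R}^d}\tfrac12\|\mathbf{C}_{\Omega_j,:}\mathbf{z} - (\mathbf{m}_j)_{\Omega_j}\|_2^2$. Because $\mathbf{m}_j = \mathbf{C}\mathbf{w}_j$, the vector $(\mathbf{m}_j)_{\Omega_j} = \mathbf{C}_{\Omega_j,:}\mathbf{w}_j$ lies in the range of $\mathbf{C}_{\Omega_j,:}$, so \emph{every} minimizer $\hat{\mathbf{z}}_j$ attains zero residual, whence $\mathbf{C}_{\Omega_j,:}(\hat{\mathbf{z}}_j - \mathbf{w}_j) = 0$. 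Since $\ker(\mathbf{C}) \subseteq \ker(\mathbf{C}_{\Omega_j,:})$ always, we obtain $\mathbf{C}\hat{\mathbf{z}}_j = \mathbf{C}\mathbf{w}_j = \mathbf{m}_j$ as soon as $\ker(\mathbf{C}_{\Omega_j,:}) = \ker(\mathbf{C})$, i.e.\ as soon as $\mathrm{rank}(\mathbf{C}_{\Omega_j,:}) = \tilde r$. It therefore remains to prove that, under assumption (2), every row submatrix $\mathbf{C}_{\Omega_j,:}$, $j = 1,\dots,n_2$, has full column rank $\tilde r$.

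For a fixed $j$, factor $\mathbf{C} = \mathbf{P}\mathbf{B}$ with $\mathbf{P}\in\mathbb{R}^{n_1\times\tilde r}$ an orthonormal basis of $\mathrm{col}(\mathbf{C})$ and $\mathbf{B}$ of full row rank; then $\mathrm{rank}(\mathbf{C}_{\Omega_j,:}) = \mathrm{rank}(\mathbf{P}_{\Omega_j,:})$, and full column rank is equivalent to $\lambda_{\min}(\mathbf{P}_{\Omega_j,:}^T\mathbf{P}_{\Omega_j,:}) > 0$. Writing $\mathbf{P}_{\Omega_j,:}^T\mathbf{P}_{\Omega_j,:} = \sum_{i}\xi_i\,\mathbf{p}_i\mathbf{p}_i^T$ with $\mathbf{p}_i$ the rows of $\mathbf{P}$ and $\xi_i$ the selection variables for column $j$, we have $\mathbb{E}[\mathbf{P}_{\Omega_j,:}^T\mathbf{P}_{\Omega_j,:}] \propto \mathbf{I}_{\tilde r}$ with least eigenvalue $\approx |\Omega_j|/n_1$, and the coherence of $\mathrm{col}(\mathbf C)$ gives $\|\mathbf{p}_i\|_2^2 \le \mu_0(\mathbf{C})\tilde r/n_1$. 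Matrix Chernoff then yields $\mathbb{P}[\mathbf{C}_{\Omega_j,:}\text{ rank-deficient}] \le \tilde r\,e^{-|\Omega_j|/(\mu_0(\mathbf{C})\tilde r)}$; using $|\Omega_j| \approx |\Omega|/n_2 \ge \tilde r\,\mu_0(\mathbf{C})\big(\gamma + \ln(n_2\tilde r/2)\big)$ from assumption (2), this is at most $\tilde r\,e^{-(\gamma + \ln(n_2\tilde r/2))} = (2/n_2)e^{-\gamma}$. A union bound over the $n_2$ columns contributes $2e^{-\gamma}$, and combining with the $e^{-\gamma}$ from the first step gives total failure probability at most $3e^{-\gamma}$; on the complementary event $\mathbf{C}\hat{\mathbf{z}}_j = \mathbf{m}_j$ for all $j$, i.e.\ $\mathbf{M} = \mathbf{C}\hat{\mathbf{Z}}$.

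The routine part is the linear algebra — the column decoupling and the kernel-containment argument; the technical heart, and the main obstacle, is the quantitative concentration. One must (a) invoke a matrix Chernoff/Bernstein inequality with constants sharp enough to land exactly on the thresholds in (1) and (2); (b) handle sampling without replacement, via the standard domination of the without-replacement sum by the with-replacement (or Bernoulli) sum; and (c) reconcile the global hypothesis on $|\Omega|$ with the per-column requirement — depending on the precise sampling model one either samples a fixed number of entries in each column (so $|\Omega_j| = |\Omega|/n_2$ deterministically) or adds a short Chernoff argument showing every $|\Omega_j|$ exceeds the needed threshold with high probability. A final minor point is that the minimizer of (\ref{eq:cs_incomplete}) need not be unique, which is why the argument is phrased through ``every minimizer attains zero residual'' rather than through a normal-equations characterization.
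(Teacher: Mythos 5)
Your proposal is correct in outline and rests on the same two probabilistic ingredients as the paper, but it organizes the deterministic part of the argument quite differently. The paper proceeds via three auxiliary results: (i) Remark \ref{rem:csincoherence} (from Theorem 9 of Xu et al.) gives $\|\mathbf{M}-P_{\tilde U}\mathbf{M}\|_2=0$ under assumption (1) — your column-space-capture step via matrix Chernoff on $\mathbf{V}_{I,:}^T\mathbf{V}_{I,:}$ is the same fact proved the same way; (ii) Theorem \ref{thm:strong_convex} establishes strong convexity of the reparametrized objective $g(\mathbf{Y})=\frac12\|\mathcal{R}_\Omega(\mathbf{M})-\mathcal{R}_\Omega(\tilde{\mathbf{U}}\mathbf{Y})\|_F^2$ by applying Tropp's matrix Chernoff bound \emph{once} to the full Hessian $\mathbf{H}=\sum_{(i,j)\in\Omega}\tilde{\mathbf{u}}_{i:}\tilde{\mathbf{u}}_{i:}^T\otimes\mathbf{e}_j\mathbf{e}_j^T$; and (iii) Theorem \ref{thm:error_m0} converts $\Delta$-closeness of $P_{\tilde U}\mathbf{M}$ to $\mathbf{M}$ plus $\beta$-strong convexity into the quantitative bound $\|\mathbf{M}-\hat{\mathbf{M}}\|_2^2\le 2\Delta+4n_1\Delta/\beta$, which is then applied with $\Delta=0$. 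Your route replaces (iii) entirely with the elementary observation that on the good event every column subproblem is exactly interpolable, so every minimizer has zero residual and $\hat{\mathbf{z}}_j-\mathbf{w}_j\in\ker(\mathbf{C}_{\Omega_j,:})=\ker(\mathbf{C})$. Since $\mathbf{H}$ is block-diagonal with blocks $\tilde{\mathbf{U}}_{\Omega_j,:}^T\tilde{\mathbf{U}}_{\Omega_j,:}$, your condition ``every $\mathbf{C}_{\Omega_j,:}$ has rank $\tilde r$'' is exactly equivalent to $\lambda_{\min}(\mathbf{H})>0$, so the two arguments are mathematically the same at the level of what must be verified; yours is shorter and makes the mechanism transparent, while the paper's buys a perturbation bound that would survive $\Delta>0$ (i.e., an imperfectly recovered or only approximately spanning $\mathbf{C}$).

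The one genuine gap is the point you flag in (c) but do not close: your per-column Chernoff bound needs a lower bound on each $|\Omega_j|$, and under the paper's sampling model ($\Omega$ uniform over entries of the whole matrix, not stratified by column) the $|\Omega_j|$ are random and some could in principle fall well below $|\Omega|/n_2$. This is repairable — assumption (2) forces $|\Omega|/n_2=\Omega(\ln n_2)$, so a scalar Chernoff bound plus a union bound keeps all $|\Omega_j|$ within a constant factor of their mean — but it costs an additional term in the failure-probability budget and degrades the constants, so your accounting landing exactly on $3e^{-\gamma}$ with the stated thresholds does not yet follow. The paper's single application of Tropp's theorem to the block-diagonal sum over all of $\Omega$ is precisely the device that makes this step unnecessary: the concentration is performed at the level of the whole Hessian, so the random partition of $\Omega$ across columns is absorbed automatically. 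If you want to keep your per-column formulation, the cleanest fix is to run the matrix Chernoff bound on the full block-diagonal sum as the paper does and then read off the per-column conclusion from $\lambda_{\min}(\mathbf{H})>0$.
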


\subsubsection{Discussion}

We will now discuss the theoretical aspects of CSMC method. To ensure a high probability of success, we set $\gamma = \Omega(\ln r)$ in Theorem~\ref{thm:csmc_g}. Thus, the number of sampled columns in the first stage order of $d = \Theta(\mu_0(\mathbf{M})r \ln r)$.

\paragraph{Matrix Size Reduction} Theorem~\ref{thm:csmc_g} implies that, under the assumption that $\mathbf{M}$ is incoherent, CSMC in its first stage can complete a much smaller matrix of size $n_1 \times \mathcal{O}(r \ln r)$ instead of applying a memory-intensive convex solver to the full $n_1 \times n_2$ matrix, thereby reducing memory overload and successfully recovering $\mathbf{M}$ by solving~(\ref{eq:randomized_nuclear_norm2}).

\paragraph{Sample Complexity} To successfully recover $\mathbf{M}$, CSMC first completes a column submatrix $\mathbf{C}\in\mathbb{R}^{n_1\times d}$. The nuclear-norm minimization subroutine requires $\mathcal{O}\!\big(\mu_0(\mathbf{C})(n_1+d)\tilde r\ln^2(n_1+d)\big)$ uniformly sampled entries in Stage~I \cite{Recht09, Moitra18, Chen18}. Stage~II requires $\mathcal{O}\!\big(n_2\tilde r\,\mu_0(\mathbf{C})\ln(n_2r)\big)$ entries by Theorem~\ref{thm:csmc_g} for $\gamma = \Omega(\ln r)$. By Theorem~\ref{thm:cai}, for well-conditioned matrices we have $\mu_0(\mathbf{C})=\mu_0(\mathbf{M})$. Together with the trivial bounds $\tilde{r}\le r$ and $d \le n_2$, this yields the overall sample complexity of CSMC 
\[ 
\mathcal{O}\!\big(\mu_0(\mathbf{M})\,r\,(n_1+n_2)\ln^2(n_1+n_2)\big),
\]
which matches the standard sample complexity of nuclear-norm minimization. Sharper bounds are possible when setting the theorem-prescribed 
$d=\Theta(r\ln r)$. In this case, and provided that $n_2 \gg (n_1+r\ln r)\ln n_2$, the total sample complexity reduces to
\[
\mathcal{O}\!\big(\mu_0(\mathbf{M})\,r\big((n_1+r\ln r)\ln^2 n_2 + n_2\ln n_2\big)\big),
\]
so that the $n_2$-term dominates and yields an improvement of a factor $\Theta(\ln n_2)$ compared to the nuclear-norm minimization.

Finally, we note that Stage~II of CSMC can also be interpreted as a fast standalone matrix completion method under a sampling scheme in which the $d$ columns are fully observed, and an additional subset $\Omega$ of entries is sampled uniformly at random. Under these assumptions, Theorem~\ref{thm:csmc_g} guarantees exact recovery of $\mathbf{M}$ with 
\[
d n_1 + |\Omega| = \mathcal{O}\!\big(\mu_0(\mathbf{M}) r(n_1+n_2)\ln(n_2r)\big)
\]
observations when $\gamma=\Omega(\ln r)$ and $\mu_0(\mathbf{C}) = \mathcal{O}(\mu_0(\mathbf{M}))$. Table~\ref{tab:sample_complexity} compares the sample complexity of CSMC with the sample complexities of other matrix completion algorithms that use CSS.

\begin{table*}[h]
\caption{Sample complexities of different MC algorithms $n=n_1+n_2$}
\begin{adjustbox}{width=\textwidth}
\begin{tabular}{@{}llllll@{}}
\toprule
Algorithm & CSMC & CUR+ \cite{Xu15} & MC2 \cite{Ward18} & AMC \cite{Krishnamurthy14} \\
\midrule
Sample complexity &$\mathcal{O}(\mu_0(\mathbf{M})rn\ln^2n)$ & $\mathcal{O}(\mu^2_0(\mathbf{M})rn\ln r)$. & $\mathcal{O}(\mu_0(\mathbf{M})rn \log^2 n)$ &  $\mathcal{O}( \mu(U)rn\ln^2 r)$ \\
Full-column sampling & No  & Yes & No & Yes  \\
\end{tabular}
\end{adjustbox}
\label{tab:sample_complexity}
\end{table*}

\paragraph{Choice of sampling columns algorithm} To develop a fast and efficient matrix-completion method, we adopt a uniform column sampling strategy, which is computationally inexpensive and does not require access to all matrix entries. Moreover, Xu et al.~\cite{Xu15} showed that under the standard incoherence assumption, uniformly sampled columns accurately approximate the column space of a matrix. We acknowledge that uniform sampling does not capture differences between columns. Alternative strategies, such as leverage score sampling \cite{hamm2020stability, Drineas08}, volume sampling \cite{Deshpande10}, or determinantal point process sampling \cite{belhadji2020determinantal}, may yield better approximations but typically require full access to matrix entries. On the other hand, adaptive sampling methods, while potentially effective for matrices with high incoherence parameters, incur additional computational overhead \cite{Wang17, Krishnamurthy14, Krishnamurthy2013}.

\paragraph{Robustness of CSMC} Our analysis so far has focused on exact matrix completion in the noiseless setting. In practice, however, the two-stage structure of CSMC raises two important considerations. First, errors from Stage I may propagate to Stage II. While nuclear norm minimization can recover $\mathbf{M}$ with high probability, the required number of observations depends on the unknown rank and incoherence parameters, so small Stage I errors may occur. To mitigate their impact, future extensions of CSMC may incorporate weighted least squares or additional regularization in Stage II to reduce sensitivity to imperfect inputs. Second, robustness to noise must be ensured in both stages. For Stage I, robustness can be achieved by relaxing equality constraints in (\ref{eq:exact_nuclear_norm}) to inequalities or by solving a regularized formulation such as (\ref{eq:lagrange}), both of which enjoy theoretical guarantees \cite{Mazumder2010a}. For Stage II, robustness can be introduced by regularizing problem~(\ref{eq:randomized_nuclear_norm2}), for example, by penalizing the Frobenius norm of $\mathbf{Z}$, or by adopting RPCA-inspired formulations to capture sparse corruptions via an $\ell_1$ penalty. These modifications would enable CSMC to handle both dense small-magnitude and sparse large-magnitude corruptions.

\subsection{Implementation}
The CSMC is a versatile method for solving matrix completion problems. It allows the incorporation of various matrix completion algorithms in the first step and least squares algorithms in the second step. This paper discusses two algorithms that implement the CSMC method, each tailored to address matrix completion problems of different sizes.

\subsubsection{Columns Selected Nuclear Norm (CSNN)} A submatrix is filled with the exact nuclear norm minimization using SDP solver. The CSNN is dedicated to recovering the  small and medium size $\mathbf{M}$. We use the first-order Splitting Conic Solver(SCS)~\cite{scs, Donoghue21}, which offers better scalability for large problems compared to interior-point methods that require costly Hessian computations. Although SCS reduces memory and per-iteration computational costs, it converges more slowly and is still unsuitable for very large-scale matrices. 
  
  We found the first-order Splitting Conic Solver (SCS) as an efficient way to solve the SDP \cite{Donoghue21} in the Stage I. To find the optimum of (\ref{eq:randomized_nuclear_norm2}), we solve an ordinary least squares problem for each column, restricted to its observed entries. This results in $n_2$ independent problems, where each involves only the available data for one column, with an average of $\frac{|\Omega|}{n_2}$ observed entries. This formulation significantly reduces computational complexity and naturally enables efficient distributed implementations.
  
  \subsubsection{Columns Selected Proximal Gradient Descent (CSPGD)} Inexact nuclear norm minimization (\ref{eq:lagrange}) is solved by Proximal Gradient Descent (PGD). PGD is an efficient first-order method for convex optimization. It leverages the closed-form expression for the proximal operator for the objective function (\ref{eq:lagrange})  and enables enables the completion of large matrices with a convergence rate of $\mathcal{O}(1/t)$, where $t$ is the iteration number \cite{Mazumder2010a}. However, it requires calculation of the Singular Value Thresholding (SVT) algorithm, which relies on SVD in each iteration. This computation becomes expensive for large matrices. CSPGD benefits CSS strategy, which improves efficiency by reducing the dimensionality of the problem. To find the optimum of the regression problem (\ref{eq:randomized_nuclear_norm2}), we directly solve the least squares for the observed entries in each column.

We developed open-source code implementing  the CSNN and CSPGD methods. Algorithms support both Numpy arrays \cite{Harris20} and PyTorch tensors \cite{Torch}, with the latter offering the advantage of GPU acceleration for. The SDP is solved using the Splitting Conic Solver (SCS) \cite{scs}. The code is written in Python 3.10 and is available at \mbox{\url{https://github.com/ZAL-NASK/CSMC}}.

\section{Numerical experiments}

All numerical experiments were implemented in Python 3.10. For benchmarking, the Matrix Factorization (MF) method was taken from the fancyimpute library \cite{fancyimpute}.
In addition to the proposed algorithm, we implemented several state-of-the-art non-convex matrix completion methods, including SVP \cite{jain10, jain14} and ScaledGD \cite{tong2021accelerating}, as well as algorithms that incorporate CSS principles discussed earlier, namely MC2 \cite{Ward18} and CUR+Nuc, which is based on CUR+ \cite{Xu15}. Experiments were executed on a Linux workstation equipped with an 11th Gen Intel(R) Core(TM) i7-1165G7 @ 2.80 GHz (8 cores) and 32 GB RAM. Experiments S~II, S~III, and the Jester Joke Recommendation System were executed on a Linux server equipped with an AMD EPYC Processor (18 cores) and 256 GB RAM under KVM virtualization.

\subsection{Synthetic data set}
To evaluate the proposed CSMC method, we conducted experiments in a controlled synthetic environment and compared its performance against the above baselines. The goal of each experiment was to recover a random $n_1 \times n_2$ matrix $\mathbf{M}$ with the ratio of the observed entries $\rho$. To control the rank of $\mathbf{M}$ equal to $r$, the test matrix was generated as a product of the $n_1 \times r$ matrix $\mathbf{A}$ and $r \times n_2$ matrix $\mathbf{B}$. Matrices $\mathbf{A}$ and $\mathbf{B}$ were generated in two steps. In the first one, matrix entries were sampled from the normal distribution $\mathcal{N}(0, 1)$. In the second step, noise matrices with the ratio $0.3$ of non-zero entries were added to each matrix. 

We evaluated the result quality using the relative approximation error
\begin{align}\label{eq:epsilon}
\epsilon := \frac{ \|\mathbf{M}-\mathbf{\hat{M}}\|_F}{\| \mathbf{M}\|_F},
\end{align}
and compared the runtimes of all algorithms. As CSMC involves a two-stage process, unlike the single-stage approach of other algorithms, we did not compare the iteration counts required to achieve a given solution quality.

\subsubsection{Experiment S I: Sample Complexity}

The first experiment aimed to assess how the solution quality varies with the number of sampled columns ($d =  \lfloor \alpha n_2\rfloor $) and the observed-entry ratio $\rho$.

\paragraph{Settings}
We ran the first experiment on a set of $300 \times 1000$ matrices with ranks 5 and 10. We varied the ratio of known entries $\rho$ and the number of sampled columns $d$. Each configuration was executed over $N_{\text{trial}} = 20$ independent runs. The performance of the algorithms NN and CSNN-$\alpha$ ($\alpha \in {0.1, \ldots, 0.9}$) was evaluated across all trials. To illustrate the proportion of runs achieving a given error $\epsilon$ (Eq.~\ref{eq:epsilon}), we computed the empirical cumulative distribution function (ECDF) \cite{pawel19}. The ECDF is defined as $\hat{F}S: \mathbb{R}{+} \rightarrow [0, 1]$,

\begin{align}\label{eq:ecdf}
\hat{F}_S(a) = \frac{|{ s \mid \epsilon_s \leq a }|}{|S|},
\end{align}

\noindent
where each trial is represented by $s$, $\epsilon_s$ denotes the relative error obtained in trial $s$, and $S$ is the set of all trials for a given parameter configuration.  

\paragraph{Results}

The experiment highlighted that reducing column sampling in CSNN improves runtime but can affect recovery success. However Fig. \ref{fig:si_ecdf_approx} shows that sampling CSNN-$0.2$ offered a tenfold (10x) increase in time savings compared to the NN algorithm and maintains a solution quality. For rank-5 matrices, all tested $\alpha$ achieved successful recovery when $\rho \ge 0.3$, while rank-10 matrices required $\rho \ge 0.4$. For 80\% missing entries, CSNN-0.2 and CSNN-0.3 produced good solutions for ranks 5 and 10, respectively, demonstrating that a small ratio of sampled columns can substantially reduce runtime while maintaining high reconstruction quality.

\begin{figure*}[htb]
\centering
\begin{subfigure}[b]{\textwidth}
    \includegraphics[width=\textwidth]{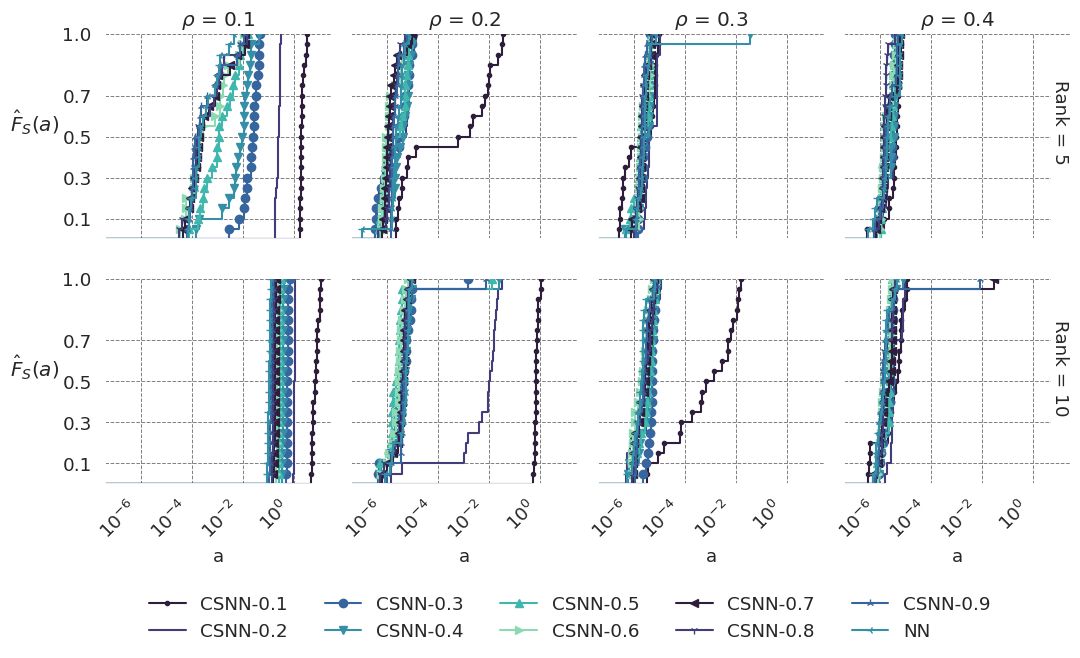}
\end{subfigure}
\begin{subfigure}[b]{\textwidth}
\includegraphics[width=\textwidth]{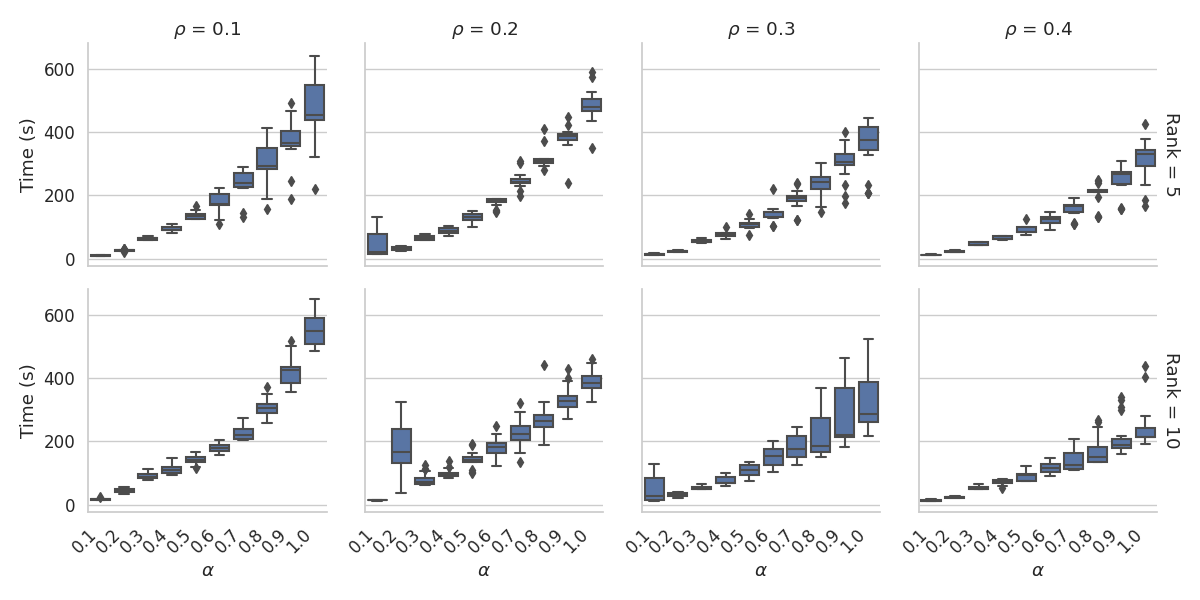}
\end{subfigure}
\caption[Results S I - relative error ECDF]{Results S I: ECDF (\ref{eq:ecdf}) and runtimes  depending on the matrix rank and rate of the known entries for NN and CSNN-$\alpha$,  $\alpha \in \{0.1,\cdots, 0.9\}$, $\mathbf{M} \in \mathbb{R}^{300 \times 1000}$.}
\label{fig:si_ecdf_approx}
\end{figure*} 

\subsubsection{Experiment S II: Comparison with non-convex methods} 

To evaluate the scalability and efficiency of the proposed CSNN method relative to state-of-the-art fast non-convex algorithms, we compared it with SVP \cite{jain10, jain14}, ScaledGD \cite{tong2021accelerating}, and Matrix Factorization solved via stochastic gradient descent (MF).

\paragraph{Settings}
We benchmarked the CSNN model, using a fixed number of sampled columns ($d=400$), against SVP, ScaledGD, and MF. For the non-convex algorithms, we tested various rank parameter $k$ values to assess their sensitivity to rank selection (\ref{eq:burer_monteiro}). Each algorithm was applied to the completion of rank-5 matrices with dimensions $n_1 = 400$ and $n_2 \in \{1000, 5000, 10000, 50000, 75000\}$ under an observed-entry ratio of $\rho = 0.2$. Performance was evaluated in terms of the approximation error (\ref{eq:epsilon}) and runtime to assess both reconstruction quality and computational scalability. Each configuration was executed over $N_{\text{trial}} = 10$ independent runs.

\paragraph{Results}
CSNN consistently achieved a relative approximation error on the order of $10^{-6}$ while maintaining a stable runtime of approximately $47$~s across all tested matrix sizes (Fig.~\ref{fig:scaling}). In contrast, MF exhibited lower accuracy ($\epsilon \approx 10^{-3}$) and substantially longer runtimes. Both ScaledGD and SVP were highly sensitive to the choice of the rank parameter~$k$. When $k$ matched the true rank of~$\mathbf{M}$ ($k=r$), their errors reached $10^{-7}$--$10^{-10}$, whereas even a slight overestimation of $r$ increased the error to the order of $10^{-2}$. Moreover, the runtime of ScaledGD and SVP increased rapidly with $n_2$, reaching several thousand seconds for the largest matrices. In contrast, CSNN maintained nearly constant runtime, demonstrating superior scalability.

\begin{figure}[!hb]
\centering
    \includegraphics[width=\textwidth]{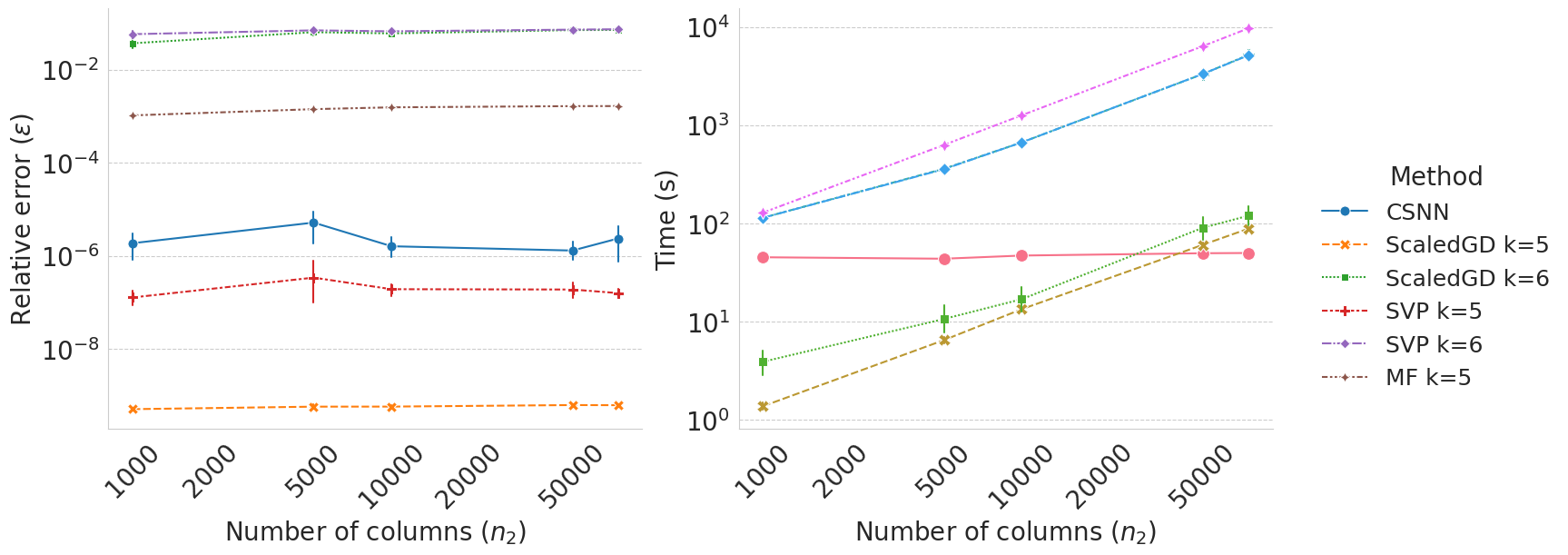}
\caption[Results S II]{Results S II: relative approximation error and runtime.}
\label{fig:scaling}
\end{figure} 

\subsubsection{Experiment S III: Comparison with CSS approaches}
We compared CSNN-0.3 with NN, MC2 (which estimates leverage scores of matrix entries) \cite{Ward18}, and CUR+Nuc, a two-stage CUR+ based matrix completion method. In the first stage of CUR+Nuc, randomly sampled rows and columns are completed using nuclear norm minimization, and in the second stage, a regression problem is solved on the filled submatrices, as described in \cite{Xu15}. Similar to CUR+, CUR+Nuc depends on the parameter~$k$, which specifies the number of singular vectors of the row and column submatrices used in the regression step. We did not include CUR+ \cite{Xu15} and AMC \cite{Krishnamurthy14} in this comparison, as both focus on different sampling patterns—specifically, they assume that entire columns of $\mathbf{M}$.

\paragraph{Settings}
To assess the influence of the rank parameter $k$ in CUR+Nuc, we conducted experiments with $k=9$ and $k=10$ and analyzed its effect on performance. The comparison was performed on matrices of size $500 \times 1000$ with rank 10, with 20\% of entries known, over 20 trials.

\paragraph{Results}

Fig.~\ref{figi:siii} compares CSNN-0.3 with NN, MC2, and CUR+Nuc. CSNN-0.3 achieved the same error as NN and MC2 ($10^{-6}$), indicating that sampling based on estimated leverage scores did not improve solution quality compared to uniform sampling. CUR+Nuc was highly sensitive to the choice of the rank parameter~$k$: when $k$ was set to one less than the true rank ($r=10$, $k=9$), the approximation error increased substantially, whereas setting $k$ to the true rank ($r=k=10$) resulted in accuracy comparable to the other algorithms. In terms of runtime, CSNN-0.3 was approximately five times faster than NN and MC2, and more than 3.5 times faster than CUR+Nuc, demonstrating its clear advantage in computational efficiency for MC tasks.

\begin{figure}[!hb]
\centering
    \includegraphics[width=\textwidth]{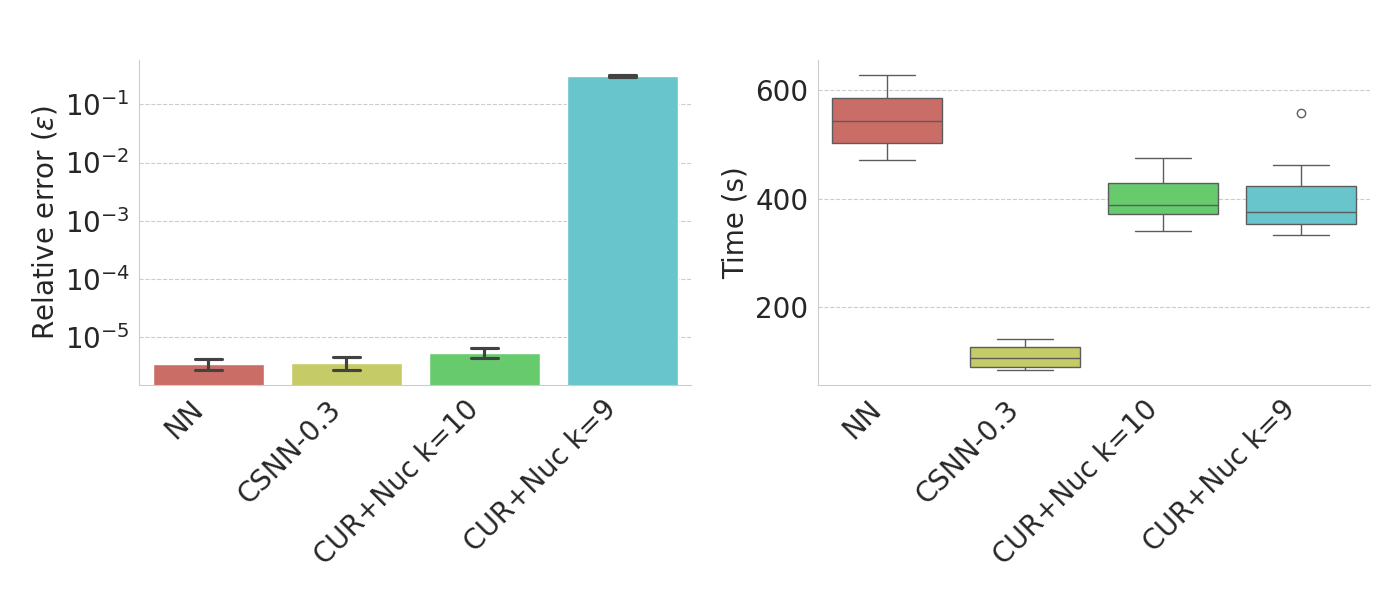}
\caption[Results S III]{Results S III: relative approximation error and runtime.}
\label{figi:siii}
\end{figure}

\subsection{Recommendation system}
Here, we assess the performance of CSNN, CSPGD, and the state-of-the-art methods SVP and ScaledGD in the context of recommendation systems, where matrix completion serves as a rating prediction technique. The evaluation was conducted on two diverse datasets: the MovieLens dataset \cite{Harper15}, which contains discrete ratings from~1~to~5, and the Jester Collaborative Filtering dataset \cite{jesterGoldberg01}, which includes continuous ratings in the range from~$-10{,}000$~to~$10{,}000$ provided by~24{,}983 users who rated~36~or more jokes. 

\subsubsection{Movie Lens Dataset} The goal of the first experiment was to empirically verify that CSMC maintains the accuracy of nuclear-norm-based MC while improving computational efficiency.

\paragraph{Settings}
To benchmark CSNN and CSPGD algorithms, we constructed two data sets: Movie Lens Small and Movie Lens Big. The Movie Lens Small was represented by $\mathbf{M} \in \mathbb{R}^{140 \times 668}$ matrix obtained from the Movie Lens Small dataset, which contained 100 000 5-star ratings applied to 9742 movies by 610 users. Since the original matrix was too large for SDP solvers, we sorted the data by user frequency rate and took data containing rates made by the top 60\% users. Then, we sorted the obtained data by movie frequency rate and selected the top 50\% movies. The obtained matrix had $\rho=0.25$ known entries. We evaluated CSPGD algorithms on the Movielens 25M data set, containing 25 million ratings applied to 62,000 movies by 162,000 users. Again, due to the large size and low observation rate, we extracted a $654 \times 27813$ matrix $\mathbf{M}$. The known entries rate $\rho$ was equal to 0.09. 

We assessed the performance of the NN and CSNN-$\alpha$ for $\alpha \! \in \! \{ 0.3, \! 0.4, \! 0.5, \! 0.7\}$ and CSPGD-$\alpha$ for $\alpha \in \{ 0.3, 0.5\}$. We followed previous work \cite{Cai22cur} and employed the Cross-Validation method.  In each trial of the experiment, $\Omega$ set was randomly split into training and testing sets denoted by $\Omega_{\text{train}}$ and $\Omega_{\text{test}}$. Specifically, we randomly selected $\rho$ rate of the observed set, and by assigning the null values to the rest of the entries, we constructed $\mathcal{R}_{\Omega_{\text{train}}}(\mathbf{M})$ matrix. We evaluated each algorithm on the $\Omega_{\text{test}}$ set. We conducted 20 independent experimental trials under each scenario. We compared Normalized Mean Absolute Error (NMAE)

\begin{align}\label{eq:nmae}
    \text{NMAE} = \frac{1}{| \Omega_{\text{test}}|(m_\text{max} -m_{\text{min}} )}\sum_{(i, j) \in \Omega_{\text{test}}} | \hat{m}_{ij} - m_{ij}|,
\end{align}
\noindent
where $m_\text{max}$ and $m_{\text{min}}$ denote the maximum and minimum rating, respectively, and $\Omega_{\text{test}}$ denotes the set of indices in test set. This metric is widely used to assess collaborative filtering tasks \cite{Cai22cur}. The quality of the recommendation was also measured as hit-rate, defined as 
\begin{align}
    \text{HR} = \frac{\# hits}{|  \Omega_{\text{test}} |},
\end{align}
\noindent
where a predicted rating was a \textit{hit} if its rounded value equals the actual rating. 

\paragraph{Results}

Fig. \ref{fig:mi} shows that the predictions of CSNN-$0.7$ maintained the quality of the NN algorithm in terms of NMAE: (0.16 vs 0.13) and HR (0.23 vs 0.25), and resulted in runtime savings from 57 seconds to 40 seconds (Table \ref{tab:mi}). CSPGD algorithms were much faster than PGD (Fig. \ref{fig:mii}). The HR values for PGD and CSPGD-$0.3$ were equal to 0.28 and 0.24, respectively (Table \ref{tab:mii}). 

\begin{centering}
\begin{table*}[!htb]
\caption[Results M I]{Results on Movie Lens Small dataset}
\centering
\begin{adjustbox}{width=0.7\textwidth}
\begin{tabular}{lrrrrrr}
\toprule
{} & \multicolumn{2}{c}{NMAE} &  \multicolumn{2}{c}{HR} & \multicolumn{2}{c}{Time [s]} \\
\cmidrule(lr){2-3} \cmidrule(lr){4-5} \cmidrule(lr){6-7} 
{} &       mean &   std &  mean &   std &         mean &   std \\
Algorithm &            &       &       &       &              &       \\
\midrule
CSNN-0.3  &      0.271 & 0.028 & 0.172 & 0.005 &       10.137 & 1.118 \\
CSNN-0.4  &      0.240 & 0.060 & 0.191 & 0.005 &       17.677 & 2.005 \\
CSNN-0.5  &      0.248 & 0.176 & 0.208 & 0.006 &       28.673 & 2.419 \\
CSNN-0.7  &      \textbf{0.156} & \textbf{0.011} & \textbf{0.232} & \textbf{0.004} &  \textbf{39.893} & \textbf{8.595} \\
NN        & \textbf{0.127} & \textbf{0.001} & \textbf{0.255} & \textbf{0.006} &  \textbf{56.924} & \textbf{2.435} \\
\bottomrule
\end{tabular}
\end{adjustbox}
\label{tab:mi}
\end{table*}
\end{centering}

\begin{figure*}[htb]
     \centering
     \includegraphics[width=\textwidth]{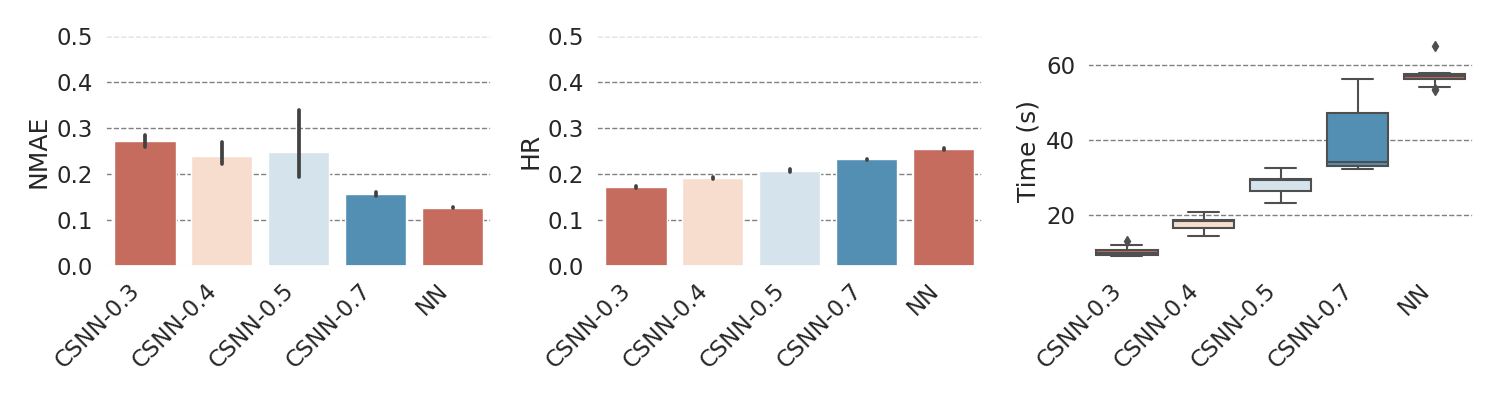}   
     \caption[Results M I]{Results on Movie Lens Small dataset;  NN and CSNN algorithms.}
     \label{fig:mi}
\end{figure*}  

\begin{table*}[!htb]
\caption[Results M II]{Results on Movie Lens Big dataset}
\centering
\begin{adjustbox}{width=0.7\textwidth}
\begin{tabular}{lrrrrrr}
\toprule
{} & \multicolumn{2}{c}{NMAE} & \multicolumn{2}{c}{HR} & \multicolumn{2}{c}{Time [s]} \\
\cmidrule(lr){2-3} \cmidrule(lr){4-5} \cmidrule(lr){6-7} 
{} &       mean &   std &  mean &   std &         mean &   std \\
Algorithm &            &       &       &       &              &       \\
\midrule
CSPGD-0.3  &    \textbf{0.138} & \textbf{0.001} & \textbf{0.245} & \textbf{0.002} & \textbf{23.032} & \textbf{0.321} \\
CSPGD-0.5  &      0.135 & 0.001 & 0.252 & 0.001 &       27.774 & 0.299 \\
PGD        &  \textbf{0.119} & \textbf{0.001} & \textbf{0.281} & \textbf{0.001} &  \textbf{449.026} & \textbf{9.502} \\
\bottomrule
\end{tabular}
\end{adjustbox}
\label{tab:mii}
\end{table*}

\begin{figure*}[!htb]
     \centering
     \includegraphics[width=\textwidth]{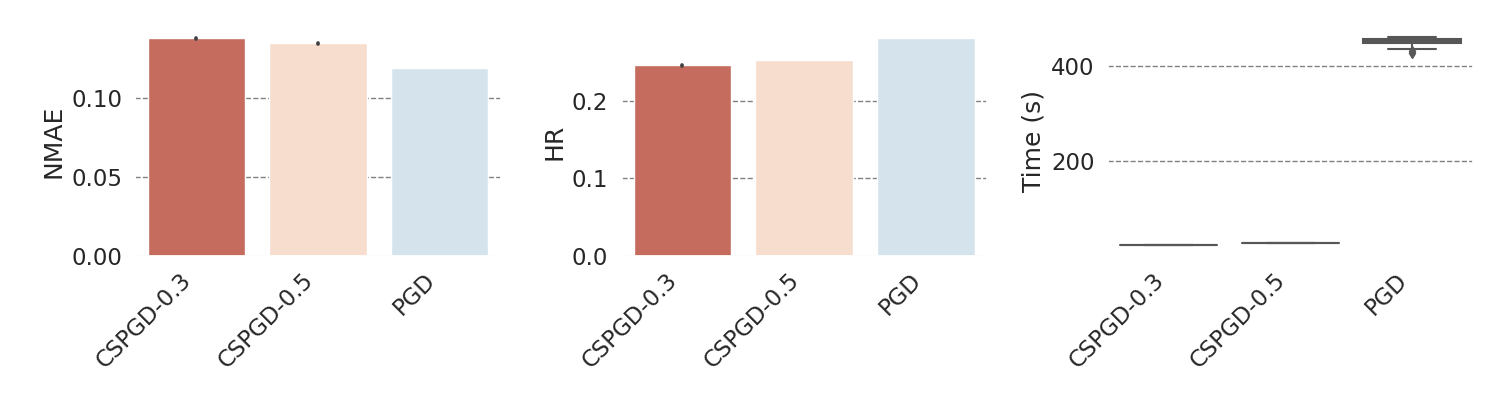}   
     \caption[Results M II]{Results on Movie Lens Big dataset; PGD and CSPGD algorithms.}
     \label{fig:mii}
\end{figure*} 
\newpage
\subsubsection{Jester Joke Recommendation System}
The goal of this experiment was to compare CSNN with state-of-the-art non-convex algorithms SVP and ScaledGD that are used in practice.

\paragraph{Settings}
We compared the performance of \textsc{CSNN}-$\alpha$ for $\alpha \in \{0.03, 0.04, 0.16\}$ against the state-of-the-art algorithms SVP and ScaledGD, each evaluated with parameters $k \in \{5, 10, 20,50\}$. 
The experiments were conducted on the Jester~Data~1 dataset, which contains ratings from 24{,}983 users who have rated at least 36 jokes, forming a matrix of size $101 \times 24{,}983$ \cite{jesterGoldberg01}. The dataset includes approximately $72\%$ observed entries.  For each configuration, an additional $80\%$ of the observed entries were randomly withheld to form a test set, leaving about $14\%$ of the entries available for training. All experiments were repeated over 10 trials.

\paragraph{Results} 

The ScaledGD and SVP methods achieved the best NMAE (0.24) when the $k$ parameter was set to 5 (Fig.~\ref{fig:jester}). However, they were both slower than CSNN-0.02, which required only 86 seconds to reach a smaller NMAE of 0.21. Moreover, CSNN-0.04 and CSNN-0.16 achieved better results, with NMAE values of 0.20 and 0.19, respectively, although their runtimes were longer—173 seconds and 4795.6 seconds (Table~\ref{tab:jester}).

\begin{figure}[!htb]
\centering
    \includegraphics[width=\textwidth]{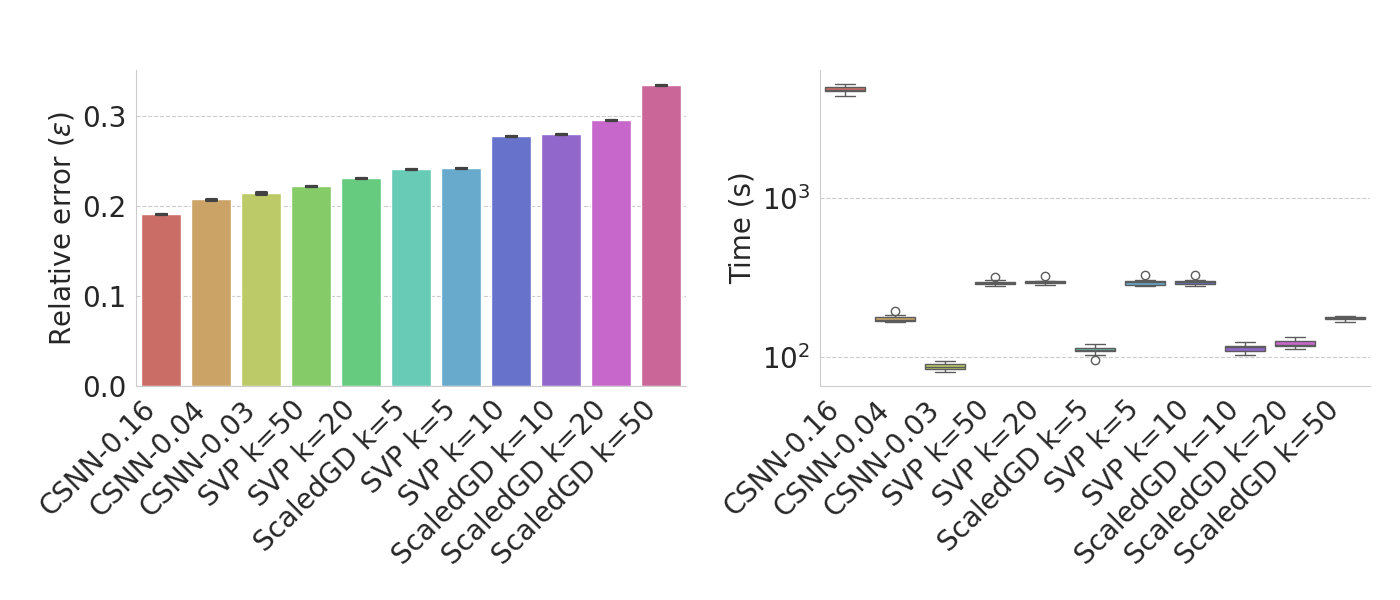}
\caption[Results S III]{Results for Jester Joke Recommendation System.}
\label{fig:jester}
\end{figure} 

\begin{centering}
\begin{table*}[!htb]
\caption[Results Synthetic]{Jester Joke Recommendation System results}
\centering
\begin{adjustbox}{width=0.6\textwidth}
\begin{tabular}{lrrrr}
\toprule
{} & \multicolumn{2}{c}{Error} & \multicolumn{2}{c}{Time [s]} \\
\cmidrule(lr){2-3} \cmidrule(lr){4-5}
{} & mean & std & mean & std \\
\midrule
CSNN-0.03 & \textbf{0.214} & \textbf{0.001} & \textbf{86.834} & \textbf{4.397} \\
CSNN-0.04 & 0.207 & 0.001 & 173.432 & 8.833 \\
CSNN-0.16 & \textbf{0.191} & \textbf{0.001} & \textbf{4795.6} & \textbf{237.312} \\
ScaledGD k=5 & 0.24 & 0.001 & 109.694 & 7.121 \\
SVP k=5 & 0.242 & 0.001 & 293.435 & 14.375 \\
\bottomrule
\end{tabular}
\end{adjustbox}
\label{tab:jester}
\end{table*}
\end{centering}
\newpage
\subsection{Image recovery}
Image inpainting, a technique in image processing and computer vision, fills in missing or corrupted parts. Additionally, estimating random missing pixels can be treated as a denoising method or utilized to accelerate rendering processes. Low-rank models are highly effective for this task, leveraging the assumption that images typically exhibit low-rank structures. The primary information of the image matrix is dominated by its largest singular values while setting the smallest singular values to zero can be done without losing essential details. In this paper, we compare nuclear norm minimization with SDP (NN) to our proposed CSNN method. Detailed experiments with PGD and its column-sampling variant (CSPGD), were reported in our conference paper~\cite{Krajewska24}. To avoid redundancy, we focus here on the core NN baseline while referring the reader to~\cite{Krajewska24} for comparisons with these variants.

\paragraph{Settings} The data set contained ten grey-scaled pictures of bridges downloaded from the public repository \footnote{image source (\url{https://pxhere.com/pl/})}. Images were represented as  $240 \times 360$ matrices. We assessed the performance of the algorithms among 100 independent trials (ten trials per picture). The quality of the reconstructed image was assessed with the signal-to-noise ratio (SNR)

\begin{align}\label{eq:snr}
    \text{SNR} (\mathbf{\hat{M}}) = 20 \log_{10} \left(\frac{\| \mathbf{M} \|_F}{\| \mathbf{\hat{M}} - \mathbf{M}\|_F}\right)
\end{align}
\noindent
and the relative error (\ref{eq:epsilon}).

\paragraph{Results} The CSNN-$0.7$ solutions maintained the quality of the NN in terms of SNR (Table \ref{tab:pi}). As shown in Fig. \ref{fig:pi_stats}, CSNN-$0.5$ offered considerable time savings and good relative error. Fig. \ref{fig:pi} presents one of the completed pictures.

    \begin{figure*}[!htb]

        \centering
        \begin{subfigure}[b]{0.495\textwidth}
            \centering
            \includegraphics[width=\textwidth]{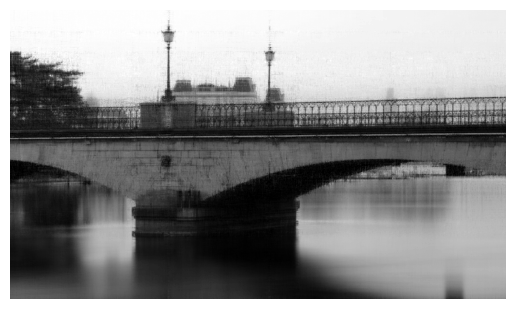}
            {Image restored with NN}    
        \end{subfigure}
        %\hfill
        \begin{subfigure}[b]{0.495\textwidth}  
            \centering 
            \includegraphics[width=\textwidth]{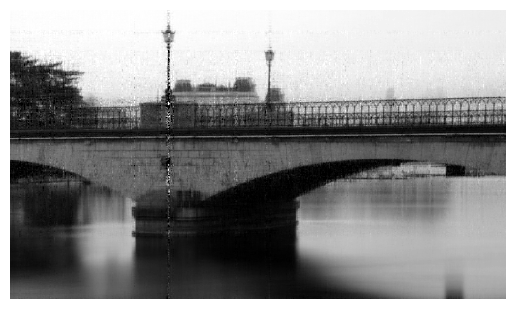}
            {Image restored with CSNN-0.7 }  
        \end{subfigure}
        \caption[Result P I]
        {Image inpainting; NN and CSNN-0.7 algorithms, $\rho=20\%$ known entries.} 
        \label{fig:pi}
    \end{figure*}

\begin{figure*}[!htb]
     \centering
   \begin{subfigure}[b]{0.6\textwidth} % Adjust width as necessary
        \includegraphics[width=\textwidth]{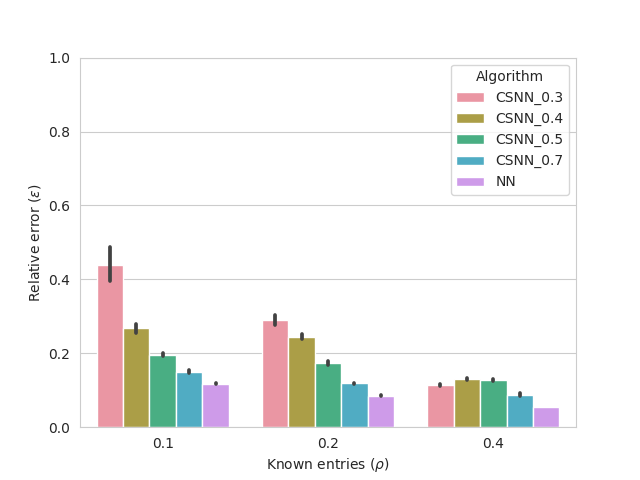}
    \end{subfigure}
    \hfill
    % Second subfigure
    \begin{subfigure}[b]{0.6\textwidth}
        \includegraphics[width=\textwidth]{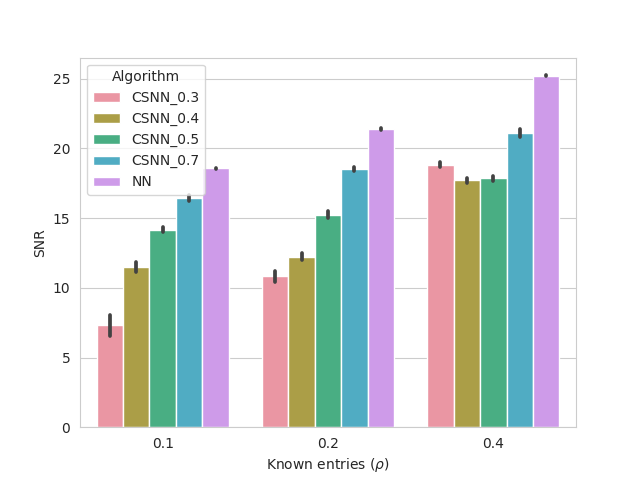} 
    \end{subfigure}
    \hfill
    \begin{subfigure}[b]{0.6\textwidth}
        \includegraphics[width=\textwidth]{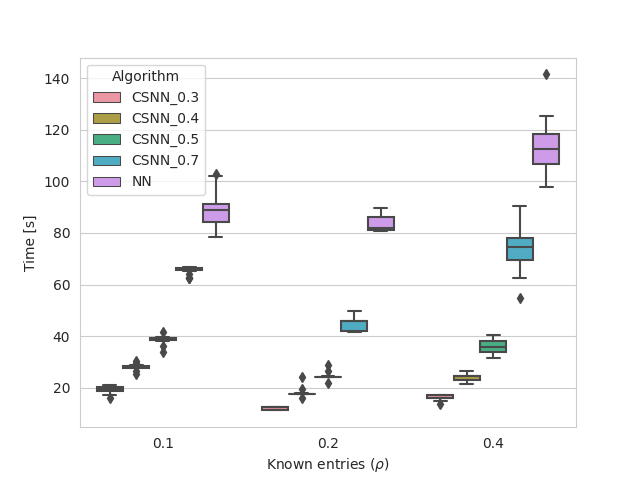} 
    \end{subfigure}

     \caption[Results P I]{Results for image inpainting; NN and CSNN-$\alpha$ algorithms for $\alpha~\in~\{ 0.3, 0.4, 0.5, 0.7 \}$.}
     \label{fig:pi_stats}
\end{figure*}

\begin{table*}[!htb]
\caption[Results P I]{Results for image inpainting; NN and CSNN-$\alpha$, $\alpha \in \{ 0.3, 0.4, 0.5, 0.7 \}$.}
\centering \begin{adjustbox}{width=\textwidth} 
\begin{tabular}{lrrrrrrrrr} \toprule {} & \multicolumn{3}{c}{SNR} & \multicolumn{3}{c}{Relative error ($\epsilon$)} & \multicolumn{3}{c}{Time [s]} \\ 
\cmidrule(lr){2-4} \cmidrule(lr){5-7} \cmidrule(lr){8-10} 
Known entries ratio $\rho$ & 0.4 & 0.2 & 0.1 & 0.4 & 0.2 & 0.1 & 0.4 & 0.2 & 0.1 \\  & & & & & & & & & \\ \midrule CSNN\_0.3 & 18.844 & 10.829 & 7.366 & 0.114 & 0.289 & 0.438 & 16.306 & 12.026 & 19.365 \\ CSNN\_0.4 & 17.708 & 12.245 & 11.491 & 0.130 & 0.245 & 0.268 & 24.151 & 17.997 & 28.123 \\ CSNN\_0.5 & 17.851 & 15.245 & 14.172 & 0.128 & 0.173 & 0.196 & 35.719 & 24.424 & 38.647 \\ CSNN\_0.7 & \textbf{21.134} & \textbf{18.526} & \textbf{16.458} & \textbf{0.088} & \textbf{0.119} & \textbf{0.151} & \textbf{73.670} & \textbf{44.948} & \textbf{65.639} \\ NN & \textbf{25.221} & \textbf{21.377} & \textbf{18.582} & \textbf{0.055} & \textbf{0.085} & \textbf{0.118} & \textbf{113.215} & \textbf{83.196} & \textbf{88.966} \\ \bottomrule \end{tabular} \end{adjustbox}  \label{tab:pi} \end{table*}

\newpage

\section{Proofs and supporting theorems}

We present a proof of Theorem \ref{thm:csmc_g}, accompanied by two additional theorems, \ref{thm:error_m0} and \ref{thm:strong_convex}, which are our original contributions. The proofs for these supplementary theorems are provided within this work. Additionally, the cited theorems from the literature have their proofs outlined therein.

We assume that $\mathbf{C} \in \mathbb{R}^{n_1 \times d}$ is a submatrix of $\mathbf{M} \in \mathbb{R}^{n_1 \times n_2}$.  $\mathbf{C}$ is obtained by sampling uniformly without replacement $d$ columns of $\mathbf{M}$. Let $r$ be the rank of $\mathbf{M}$ and let $\tilde{r}$ be the rank of $\mathbf{C}$. We  assume that the compact SVD of $\mathbf{C}$ is given by

\begin{align}\label{eq:svd_c}
\mathbf{C} = \mathbf{\tilde{U}}\mathbf{\tilde{\Sigma}}\mathbf{\tilde{V}^T},
\end{align}
\noindent
where $\mathbf{\tilde{U}} \in \mathbb{R}^{n_1 \times \tilde{r}}$, $\mathbf{\tilde{V}} \in \mathbb{R}^{n_2 \times \tilde{r}}$, and $\mathbf{\tilde{\Sigma}} \in \mathbb{R}^{\tilde{r} \times \tilde{r}}$. We  denote as $\mu_0(\mathbf{M})$ and  $\mu_0(\mathbf{C})$ the incoherence parameters of $\mathbf{M}$ and $\mathbf{C}$. We denote as $\tilde{U}$ the subspace spanned by the first $\tilde{r}$ left singular vectors of 
$\mathbf{C}$. The orthogonal projection onto $\tilde{U}$ is given by

\begin{align}
P_{\tilde{U}} = \mathbf{\tilde{U}}\mathbf{\tilde{U}}^T.
\end{align}

The following lemma can be found in \cite{Xu15} and shows that under the incoherence assumption, uniform sampling outputs a high-quality  solution to the CSS problem, with high quality.

\begin{theorem}[Theorem 9 in Xu et al. \cite{Xu15}]\label{thm:csincoherence}
    Let $\tilde{U}_{\psi}$ denote subspace spanned by the first $\psi$ left singular vectors of $\mathbf{C}$ and let $P_{\tilde{U}_{\psi}}$ denote orthogonal projection onto  $\tilde{U}_{\psi}$. Then for parameter $\gamma > 0$, with probability $1-2e^{-\gamma}$ we have
    \begin{align}
    \| \mathbf{M} - P_{\tilde{U}_{\psi}} \mathbf{M}\|_2^2 \leq \sigma_{\psi+1}^2 \left(1  + \frac{2n_2}{d}\right),
    \end{align}
\noindent
    if $d \geq  7 \mu(U_{\psi}) \psi (\gamma +\ln{\psi})$, where  $d$ denotes number of sampled columns in $\mathbf{C}$, $\sigma_{\psi+1}$ is the $(\psi+1)$-th largest singular value of $\mathbf{M}$ and $\mu(U_{\psi})$ is the incoherence parameter of the subspace spanned by $\psi$ first left singular vectors of $\mathbf{M}$,
\end{theorem}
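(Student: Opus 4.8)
The plan is to combine a deterministic structural inequality with a matrix-concentration estimate for uniform column sampling. Write the compact SVD $\mathbf{M} = \mathbf{U}\mathbf{\Sigma}\mathbf{V}^\top$ and split the right singular factors into a leading block $\mathbf{V}_\psi \in \mathbb{R}^{n_2 \times \psi}$ and a trailing block $\mathbf{V}_\perp$, with $\mathbf{\Sigma}_\psi, \mathbf{\Sigma}_\perp$ the corresponding singular-value blocks, so that $\|\mathbf{\Sigma}_\perp\|_2 = \sigma_{\psi+1}$ and $\mathbf{M}_\perp := \mathbf{M} - P_{U_\psi}\mathbf{M} = \mathbf{U}_\perp \mathbf{\Sigma}_\perp \mathbf{V}_\perp^\top$. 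I would encode the uniform selection of $d$ columns by a matrix $\mathbf{S} \in \mathbb{R}^{n_2 \times d}$ with orthonormal columns, so that $\mathbf{C} = \mathbf{M}\mathbf{S}$. Because the sampling acts on the right singular vectors, everything is governed by the two sampled blocks $\mathbf{\Omega}_1 := \mathbf{V}_\psi^\top \mathbf{S}$ and $\mathbf{\Omega}_2 := \mathbf{V}_\perp^\top \mathbf{S}$.

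First I would establish the deterministic step. A Halko--Martinsson--Tropp--style decomposition of the residual of the rank-$\psi$ projection onto the leading left singular subspace $\tilde{U}_\psi$ of $\mathbf{C}$ yields a bound of the form
\[
\|\mathbf{M} - P_{\tilde{U}_\psi}\mathbf{M}\|_2^2 \le \sigma_{\psi+1}^2 + \|\mathbf{\Sigma}_\perp \mathbf{\Omega}_2 \mathbf{\Omega}_1^\dagger\|_2^2,
\]
valid whenever $\mathbf{\Omega}_1$ has full row rank. Since $\mathbf{S}$ and $\mathbf{V}_\perp$ have orthonormal columns, the tail factor is controlled deterministically by $\|\mathbf{\Sigma}_\perp \mathbf{\Omega}_2\|_2 = \|\mathbf{M}_\perp \mathbf{S}\|_2 \le \sigma_{\psi+1}$, whence
\[
\|\mathbf{M} - P_{\tilde{U}_\psi}\mathbf{M}\|_2^2 \le \sigma_{\psi+1}^2\left(1 + \|\mathbf{\Omega}_1^\dagger\|_2^2\right).
\]
The entire problem thus reduces to lower-bounding the smallest singular value of the sampled leading block $\mathbf{\Omega}_1$.

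Second I would carry out the probabilistic step. Observe that $\mathbf{\Omega}_1 \mathbf{\Omega}_1^\top = \sum_{i \in I}(\mathbf{V}_\psi^\top \mathbf{e}_i)(\mathbf{V}_\psi^\top \mathbf{e}_i)^\top$ is a sum of $d$ rank-one positive semidefinite matrices whose expectation over the sampling is $\tfrac{d}{n_2}\mathbf{I}_\psi$. The incoherence hypothesis bounds the leverage scores of the sampled block, $\max_i \|\mathbf{V}_\psi^\top \mathbf{e}_i\|_2^2 = \mathcal{O}(\mu(U_\psi)\psi/n_2)$, which is exactly the quantity controlling the variance. A two-sided matrix Chernoff bound for sampling without replacement then gives $\|\tfrac{n_2}{d}\mathbf{\Omega}_1 \mathbf{\Omega}_1^\top - \mathbf{I}_\psi\|_2 \le \tfrac12$, equivalently $\|\mathbf{\Omega}_1^\dagger\|_2^2 \le \tfrac{2n_2}{d}$, with probability at least $1 - 2e^{-\gamma}$ provided $d \ge 7\mu(U_\psi)\psi(\gamma + \ln\psi)$; the constant $7$ and the factor $\ln\psi$ are precisely what the Chernoff exponent produces at deviation level $\delta = \tfrac12$, while the two-sidedness accounts for the factor $2$. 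Substituting into the deterministic bound yields $\sigma_{\psi+1}^2(1 + 2n_2/d)$, as claimed.

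The main obstacle is the deterministic step, specifically that $P_{\tilde{U}_\psi}$ projects onto the rank-$\psi$ \emph{leading} left singular subspace of $\mathbf{C}$ rather than onto its full column space: the elementary structural identity bounds the full-range residual, and upgrading it to the rank-$\psi$ restricted projection without assuming a spectral gap at $\sigma_\psi$ requires care, showing that discarding the trailing singular directions of $\mathbf{C}$ cannot inflate the residual beyond the stated tail because those directions carry energy at most $\sigma_{\psi+1}$. A secondary point is that the columns are sampled \emph{without replacement}, so the concentration argument must use a without-replacement matrix Chernoff inequality (or a reduction to the with-replacement case) to legitimately reach probability $1 - 2e^{-\gamma}$ with the stated constant; one must also confirm that the leverage scores entering the variance are those of the singular subspace governing the column sampling, as captured by $\mu(U_\psi)$ in the statement.
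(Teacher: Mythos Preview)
The paper does not prove this statement: Theorem~\ref{thm:csincoherence} is quoted verbatim as Theorem~9 of Xu et al.~\cite{Xu15}, and the appendix explicitly remarks that ``the cited theorems from the literature have their proofs outlined therein.'' There is therefore no in-paper argument to compare your proposal against; the paper only \emph{uses} the result (via Remark~\ref{rem:csincoherence}) as an input to Theorem~\ref{thm:csmc_g}.

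That said, your sketch is the standard route for results of this type and is consistent with how \cite{Xu15} proceeds: a Halko--Martinsson--Tropp structural inequality to reduce the residual to $\sigma_{\psi+1}^2\bigl(1+\|\mathbf{\Omega}_1^\dagger\|_2^2\bigr)$, followed by a matrix Chernoff bound (for sampling without replacement, as in the Tropp result quoted here as Theorem~\ref{thrm:tropp}) on $\lambda_{\min}(\mathbf{\Omega}_1\mathbf{\Omega}_1^\top)$ to obtain $\|\mathbf{\Omega}_1^\dagger\|_2^2 \le 2n_2/d$ with the stated probability and column budget. You have also correctly isolated the two places where care is required. First, $P_{\tilde U_\psi}$ projects onto the \emph{leading} $\psi$ left singular directions of $\mathbf{C}$, not onto its full range; since truncation can only enlarge the residual relative to the full-range projection, the basic HMT bound does not transfer automatically, and one needs the truncated variant of the structural lemma. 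Second, the leverage scores that feed the Chernoff bound are $\|\mathbf{V}_\psi^\top \mathbf{e}_i\|_2^2$, i.e., coherences of the \emph{right} singular subspace, whereas the statement as transcribed here writes $\mu(U_\psi)$ and describes it as the coherence of the \emph{left} singular subspace; this is either a notational slip in the transcription or reflects a two-sided coherence hypothesis in the original. Both caveats are genuine and you flag them appropriately; with those points resolved the argument goes through.
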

\noindent
The following remark is the immediate consequence of the Theorem \ref{thm:csincoherence}.

\begin{remark}\label{rem:csincoherence}
Let  $\psi \geq r$, where $r$ is the rank of $\mathbf{M}$. Then $\sigma_{\psi+1}=\sigma_{r+1}=0$ and with probability $1-2e^{-\gamma}$,

\begin{align}
    \| \mathbf{M} - P_{\tilde{U}} \mathbf{M}\|^2_2 = 0,
\end{align}
provided that $d \geq  7 \mu_0(\mathbf{M})r(\gamma +\ln{r})$.
\end{remark}

 We now bound the distance measured in the spectral norm between $\mathbf{M}$ and $\mathbf{\hat{M}} = \mathbf{C}\mathbf{\hat{Z}}$. To do that, we assume that the objective function $g: \mathbb{R}^{\tilde{r} \times n_2} \rightarrow \mathbb{R}$,

\begin{align}\label{eq:g_def}
g(\mathbf{Y}) = \frac{1}{2} \| \mathcal{R}_{\Omega}(\mathbf{M}) - \mathcal{R}_{\Omega}(\mathbf{\tilde{U}}\mathbf{Y})\|_F^2
\end{align}
\noindent
is strongly convex \cite{boyd2004convex},

\begin{definition}[Boyd and Vandenberghe \cite{boyd2004convex}]
    A function $g:\mathbb{R}^{\tilde{r} \times n_2} \rightarrow \mathbb{R}$ is strongly convex with parameter $\beta > 0$ if $h(\mathbf{X})=g(\mathbf{X})-\frac{\beta}{2} \| \mathbf{X} \|_F^2$ is convex.
\end{definition}

The following remarks provide a helpful characterization of strongly convex functions.

\begin{remark}[Boyd and Vandenberghe \cite{boyd2004convex}]\label{rem:strong_cx_ineq}
A function $g: \mathbb{R}^{\tilde{r} \times n_2} \rightarrow \mathbb{R}$ is strongly convex with parameter $\beta > 0$ if $g$ is everywhere differentiable and
\begin{align}
g(\mathbf{Y}) \geq g(\mathbf{X}) + \langle \nabla g(\mathbf{X}), \mathbf{Y}-\mathbf{X} \rangle + \frac{\beta}{2} \| \mathbf{Y} - \mathbf{X}\|_F^2,
\end{align}
for any $\mathbf{X}, \mathbf{Y} \in \mathbb{R}^{\tilde{r} \times n_2}$, where inner product is defined as

\begin{align}
\langle \mathbf{X}, \mathbf{Y} \rangle = \text{tr}(\mathbf{X}^T\mathbf{Y}).
\end{align}

\end{remark}

\begin{remark}[Boyd and Vandenberghe \cite{boyd2004convex}]\label{rem:strong_hess}
If $g$ is twice differentiable, then $g(\mathbf{X})$ is strongly convex with parameter  $\beta > 0$ if  $\nabla^2g(\mathbf{X}) \succeq \beta \mathbf{I}$ for any $\mathbf{X} \in \mathbb{R}^{\tilde{r} \times n_2}$.
\end{remark}

We now provide the formula for the first-order and we partial derivatives of $g$. Let  $y_{sw}$ denote the $(s,w)$ entry of the matrix $\mathbf{Y}$, and $m_{lw}$  denote $(l,w)$ entry of $\mathbf{M}$ then the partial derivative of $g$ with respect to  $y_{sw}$ is given by

\begin{align}
\frac{\partial g}{\partial y_{sw}} &= -\sum \limits_{l : (l, w) \in \Omega} (m_{lw}-\sum_{i=1}^{\tilde{r}}{\tilde{u}_{li}} y_{iw}) \tilde{u}_{ls} \nonumber \\
&= \sum \limits_{l : (l, w) \in \Omega} (\sum_{i=1}^{\tilde{r}}{\tilde{u}_{li}} y_{iw} - m_{lw}) \tilde{u}_{ls}.
\end{align}

\noindent The second-order partial derivative is equal to

\begin{align}\label{eq:second_partial}
\frac{\partial^2 g}{\partial y_{sw}\partial y_{pq}} = \begin{cases}\sum \limits_{l : (l, w) \in \Omega} \tilde{u}_{ls}\tilde{u}_{lp} & w = q , \\
0 & w \neq  q .
\end{cases}
\end{align}

The following theorem shows that under certain assumptions, $\mathbf{\hat{M}} = \mathbf{C}\mathbf{\hat{Z}}$  is close to the matrix $\mathbf{M}$ in the spectral norm. Those assumptions include that i) $P_{\tilde{U}}\mathbf{M}$ is close to $\mathbf{M}$, ii) $g$ is strongly convex with parameter $\beta$. The following proofs take inspiration from the work of Xu et al. \cite{Xu15}.  

\begin{theorem}\label{thm:error_m0}
    Suppose that $\| \mathbf{M}- P_{\tilde{U}} \mathbf{M}\|_2^2 \leq \Delta$ for the parameter $\Delta > 0$, and let $\tilde{U}$ be the subspace of $\mathbb{R}^{n_1}$ spanned by the first $\tilde{r}$ left  singular vectors of $\mathbf{C}$, $P_{\tilde{U}} = \mathbf{\tilde{U}} \mathbf{\tilde{U}^T}$. Assume that $g$ is strongly convex with a parameter $\beta$,  then 

\begin{align}
\| \mathbf{M} -\mathbf{\hat{M}}\|_2^2 \leq 2\Delta + \frac{4 n_1 \Delta}{\beta}.
\end{align}
\end{theorem}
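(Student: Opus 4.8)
The plan is to reduce the least-squares problem \ref{eq:cs_incomplete} over $\mathbf{Z}\in\mathbb{R}^{d\times n_2}$ to the problem of minimizing $g$ in \ref{eq:g_def}, and then combine a triangle inequality with the strong-convexity hypothesis. First I would exploit the compact SVD $\mathbf{C}=\mathbf{\tilde{U}}\mathbf{\tilde{\Sigma}}\mathbf{\tilde{V}}^T$: for any $\mathbf{Z}$ we have $\mathbf{C}\mathbf{Z}=\mathbf{\tilde{U}}\mathbf{Y}$ with $\mathbf{Y}=\mathbf{\tilde{\Sigma}}\mathbf{\tilde{V}}^T\mathbf{Z}$, and since $\mathbf{\tilde{\Sigma}}\mathbf{\tilde{V}}^T\in\mathbb{R}^{\tilde{r}\times d}$ has full row rank $\tilde{r}$, the linear map $\mathbf{Z}\mapsto\mathbf{\tilde{\Sigma}}\mathbf{\tilde{V}}^T\mathbf{Z}$ is onto $\mathbb{R}^{\tilde{r}\times n_2}$. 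Hence the objective of \ref{eq:cs_incomplete} is precisely $g$ precomposed with this surjection, so any minimizer $\mathbf{\hat{Z}}$ must satisfy $\mathbf{\tilde{\Sigma}}\mathbf{\tilde{V}}^T\mathbf{\hat{Z}}=\mathbf{\hat{Y}}$, where $\mathbf{\hat{Y}}$ is the (unique, by strong convexity) minimizer of $g$; consequently $\mathbf{\hat{M}}=\mathbf{C}\mathbf{\hat{Z}}=\mathbf{\tilde{U}}\mathbf{\hat{Y}}$ is well defined regardless of the choice of minimizer $\mathbf{\hat{Z}}$.

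Next I would write $\mathbf{M}-\mathbf{\hat{M}}=(\mathbf{M}-P_{\tilde{U}}\mathbf{M})+\mathbf{\tilde{U}}(\mathbf{\tilde{U}}^T\mathbf{M}-\mathbf{\hat{Y}})$, using $P_{\tilde{U}}\mathbf{M}=\mathbf{\tilde{U}}\mathbf{\tilde{U}}^T\mathbf{M}$. Since $\mathbf{\tilde{U}}$ has orthonormal columns, $\|\mathbf{\tilde{U}}\mathbf{W}\|_2=\|\mathbf{W}\|_2$, so the triangle inequality together with $(a+b)^2\le 2a^2+2b^2$ and the hypothesis $\|\mathbf{M}-P_{\tilde{U}}\mathbf{M}\|_2^2\le\Delta$ yields
\begin{align}
\|\mathbf{M}-\mathbf{\hat{M}}\|_2^2 \le 2\Delta + 2\|\mathbf{\tilde{U}}^T\mathbf{M}-\mathbf{\hat{Y}}\|_2^2 .
\end{align}
To bound the remaining term I would pass to the Frobenius norm, $\|\mathbf{\tilde{U}}^T\mathbf{M}-\mathbf{\hat{Y}}\|_2^2\le\|\mathbf{\tilde{U}}^T\mathbf{M}-\mathbf{\hat{Y}}\|_F^2$, and invoke strong convexity: since $g$ is differentiable and $\mathbf{\hat{Y}}$ is its unconstrained minimizer, $\nabla g(\mathbf{\hat{Y}})=0$, so Remark \ref{rem:strong_cx_ineq} with $\mathbf{X}=\mathbf{\hat{Y}}$ and $\mathbf{Y}=\mathbf{\tilde{U}}^T\mathbf{M}$ gives $\frac{\beta}{2}\|\mathbf{\tilde{U}}^T\mathbf{M}-\mathbf{\hat{Y}}\|_F^2\le g(\mathbf{\tilde{U}}^T\mathbf{M})-g(\mathbf{\hat{Y}})\le g(\mathbf{\tilde{U}}^T\mathbf{M})$. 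Finally $g(\mathbf{\tilde{U}}^T\mathbf{M})=\frac{1}{2}\|\mathcal{R}_{\Omega}(\mathbf{M}-P_{\tilde{U}}\mathbf{M})\|_F^2\le\frac{1}{2}\|\mathbf{M}-P_{\tilde{U}}\mathbf{M}\|_F^2\le\frac{1}{2}\,\mathrm{rank}(\mathbf{M}-P_{\tilde{U}}\mathbf{M})\,\|\mathbf{M}-P_{\tilde{U}}\mathbf{M}\|_2^2\le\frac{n_1\Delta}{2}$, using that $\mathcal{R}_{\Omega}$ only zeroes entries, the inequality $\|\mathbf{A}\|_F^2\le\mathrm{rank}(\mathbf{A})\|\mathbf{A}\|_2^2$, and $\mathrm{rank}(\mathbf{M}-P_{\tilde{U}}\mathbf{M})\le n_1$. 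Chaining these bounds gives $\|\mathbf{\tilde{U}}^T\mathbf{M}-\mathbf{\hat{Y}}\|_2^2\le n_1\Delta/\beta$, hence $\|\mathbf{M}-\mathbf{\hat{M}}\|_2^2\le 2\Delta+2n_1\Delta/\beta\le 2\Delta+4n_1\Delta/\beta$, which is the claimed inequality.

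The step I expect to be the main obstacle is the reduction in the first paragraph. When $d>\tilde{r}$ the map $\mathbf{Z}\mapsto\mathbf{C}\mathbf{Z}$ has a nontrivial kernel, so $\mathbf{\hat{Z}}$ itself is not unique; the argument must instead show that the product $\mathbf{C}\mathbf{\hat{Z}}$, and therefore $\mathbf{\hat{M}}$, is uniquely determined and equal to $\mathbf{\tilde{U}}\mathbf{\hat{Y}}$. Once this identification is secured, the remainder is a routine chain of norm inequalities and a single application of the strong-convexity characterization from Remark \ref{rem:strong_cx_ineq}.
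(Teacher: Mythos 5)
Your proposal is correct and follows essentially the same route as the paper's proof: reduce the least-squares problem over $\mathbf{Z}$ to minimizing $g$ over $\mathbf{Y}=\mathbf{\tilde{\Sigma}}\mathbf{\tilde{V}}^T\mathbf{Z}$, split $\mathbf{M}-\mathbf{\hat{M}}$ through $P_{\tilde{U}}\mathbf{M}$, and control $\|\mathbf{\tilde{U}}^T\mathbf{M}-\mathbf{\hat{Y}}\|_F$ via strong convexity together with the rank-based bound $\|\mathbf{A}\|_F^2\le \mathrm{rank}(\mathbf{A})\|\mathbf{A}\|_2^2$. The only differences are cosmetic: you justify $\mathbf{C}\mathbf{\hat{Z}}=\mathbf{\tilde{U}}\mathbf{\hat{Y}}$ by surjectivity of $\mathbf{Z}\mapsto\mathbf{\tilde{\Sigma}}\mathbf{\tilde{V}}^T\mathbf{Z}$ where the paper argues by contradiction, and by keeping the factor $\tfrac12$ in $g$ you actually obtain the slightly sharper constant $2n_1\Delta/\beta$ before relaxing to the stated $4n_1\Delta/\beta$.
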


\begin{proof}
Set $\mathbf{Y}_1 = \mathbf{\mathbf{\tilde{U}}^T} \mathbf{M}$ , then from the assumptions

\begin{align}
    \|\mathbf{M} - \mathbf{\tilde{U}}\mathbf{Y}_1\|_2^2 \leq \Delta.
\end{align}

\noindent Thus

\begin{align}
   \frac{1}{r'}\|\mathbf{M} - \mathbf{\tilde{U}}\mathbf{Y}_1\|_F^2 \leq \|\mathbf{M} - \mathbf{\tilde{U}}\mathbf{Y}_1\|_2^2 \leq 
    \Delta,
\end{align}

\noindent where $r'$ denotes the rank of the matrix $(\mathbf{M} - \mathbf{\tilde{U}}\mathbf{Y}_1) \in \mathbb{R}^{n_1 \times n_2}$ and

\begin{align}
 \|\mathcal{R}_{\Omega}(\mathbf{M}) - \mathcal{R}_{\Omega}(\mathbf{\tilde{U}}\mathbf{Y}_1)\|_F^2 \leq \|\mathbf{M} -\mathbf{\tilde{U}}\mathbf{Y}_1\|_2^2 \leq r' \Delta.
\end{align}

\noindent Since any of the matrix dimensions bounds rank, $r' \leq n_1$ (for thick matrices $n_1 < n_2$)

\begin{align}\label{eq:help}
 \|\mathcal{R}_{\Omega}(\mathbf{M}) - \mathcal{R}_{\Omega}(\mathbf{\tilde{U}}\mathbf{Y}_1)\|_F^2 \leq  n_1 \Delta.
\end{align}

\noindent Let $\mathbf{\hat{Z}}$  be a solution to the problem \ref{eq:cs_incomplete}. Then 

\begin{align}
f(\mathbf{\hat{Z}}) &=  \|\mathcal{R}_{\Omega}(\mathbf{M}) - \mathcal{R}_{\Omega}(\mathbf{C}\mathbf{\hat{Z}})\|_F^2 
= \|\mathcal{R}_{\Omega}(\mathbf{M}) - \mathcal{R}_{\Omega}(\mathbf{\tilde{U}}\mathbf{\tilde{\Sigma}}\mathbf{\tilde{V}}^T\mathbf{\hat{Z}})\|_F^2,
\end{align}

\noindent and $\mathbf{\hat{Y}} := \mathbf{\tilde{\Sigma}}\mathbf{\tilde{V}}^T\mathbf{\hat{Z}}$ must be a minimum of $g$. 
Indeed, assume that $g(\mathbf{Y}_2) < g(\mathbf{\hat{Y}})$ for some $\mathbf{Y}_2 \neq \mathbf{\hat{Y}}$, i.e.

\begin{equation}
\begin{aligned}
 \|\mathcal{R}_{\Omega}(\mathbf{M}) - \mathcal{R}_{\Omega}(\mathbf{C}\mathbf{\hat{Z}})\|_F^2 &= \|\mathcal{R}_{\Omega}(\mathbf{M}) - \mathcal{R}_{\Omega}(\mathbf{\tilde{U}}\mathbf{\hat{Y}})\|_F^2 \\
 & > \|\mathcal{R}_{\Omega}(\mathbf{M}) - \mathcal{R}_{\Omega}(\mathbf{\tilde{U}}\mathbf{Y_2})\|_F^2 \\
 & = \|\mathcal{R}_{\Omega}(\mathbf{M}) - \mathcal{R}_{\Omega}(\mathbf{C}\underbrace{\mathbf{\tilde{V}}\mathbf{\tilde{\Sigma}^{-1}}\mathbf{Y_2}}_{\mathbf{X}_2})\|_F^2,
\end{aligned}
\end{equation}

\noindent and $f(\mathbf{X}_2) < f(\mathbf{\hat{Z}})$ where $f$ is the objective function in problem \ref{eq:cs_incomplete}. 

We bound distance between $\mathbf{Y}_1$ and $\mathbf{\hat{Y}}$ using strong convexity of $g$. Since $\mathbf{\hat{Y}}$ is minimum of $g(\mathbf{Y})$ (\ref{eq:g_def}), then $\nabla g (\mathbf{\hat{Y}})=\mathbf{0}$ and

\begin{align}
\langle \nabla g(\mathbf{\hat{Y}}), \mathbf{Y}_1 - \mathbf{\hat{Y}} \rangle = 0.
\end{align}

\noindent Thus, by the Remark \ref{rem:strong_cx_ineq}, we have

\begin{align}
g(\mathbf{Y}_1) \geq g(\mathbf{\hat{Y}}) + \frac{\beta}{2} \| \mathbf{Y}_1 - \mathbf{\hat{Y}}\|_F^2.
\end{align}

\noindent Using the definition of $g$ given in (\ref{eq:g_def}),

\begin{equation}\label{eq:sce}
\begin{aligned}
\frac{\beta}{2} \| \mathbf{Y}_1 - \mathbf{\hat{Y}} \|_F^2 
& \leq \|\mathcal{R}_{\Omega}(\mathbf{M}) - \mathcal{R}_{\Omega}(\mathbf{\tilde{U}}\mathbf{Y}_1)\|_F^2. \\
\end{aligned}
\end{equation}

\noindent Combining (\ref{eq:sce}) and  (\ref{eq:help}), we obtain

\begin{align}\label{eq:bound_yy0}
 \| \mathbf{Y}_1 - \mathbf{\hat{Y}}\|_F^2 \leq \frac{2}{\beta} \|\mathcal{R}_{\Omega}(\mathbf{M}) - \mathcal{R}_{\Omega}(\mathbf{\tilde{U}}\mathbf{Y}_1)\|_F^2 \leq \frac{2n_1}{\beta}\Delta.
\end{align}

\noindent By applying the triangle inequality, Cauchy-Schwarz inequality, and leveraging the fact that the Frobenius norm is an upper bound for the spectral norm, 

\begin{equation}
\begin{aligned}
\| \mathbf{M} -\mathbf{\hat{M}}\|_2^2   & = \| \mathbf{M}  -\mathbf{C}\mathbf{\hat{Z}} \|_2^2 \\
& \leq 2\| \mathbf{M} -\mathbf{\tilde{U}}\mathbf{\tilde{U}^T} \mathbf{M} \|_2^2 + 2\| \mathbf{\tilde{U}}\mathbf{\tilde{U}^T} \mathbf{M}  -\mathbf{C}\mathbf{\hat{Z}} \|_2^2 \\
& \leq 2\| \mathbf{M} - P_{\tilde{U}}\mathbf{M} \|_2^2 + 2\| \mathbf{\tilde{U}}\mathbf{\tilde{U}^T} \mathbf{M}  -\mathbf{C}\mathbf{\hat{Z}}  \|_F^2.
\end{aligned}
\end{equation}

\noindent The first component on the right side is bounded by the assumptions,

\begin{align}
 \| \mathbf{M} - P_{\tilde{U}}\mathbf{M} \|_2^2 \leq \Delta.
\end{align}

\noindent To bound the second component, we use the definition of the $\mathbf{Y}_1$ and $\mathbf{\hat{Y}}$,

\begin{equation}
\begin{aligned}
 \| \mathbf{\tilde{U}}\mathbf{\tilde{U}}^{T} \mathbf{M}  -\mathbf{\tilde{U}}\mathbf{\tilde{\Sigma}}\mathbf{\tilde{V}}^T\mathbf{\hat{Z}} \|_F^2  
=  \| \mathbf{\tilde{U}} (\mathbf{Y}_1 - \mathbf{\hat{Y}}) \|_F^2    \leq \| \mathbf{\tilde{U}}\|_2^2 \| \mathbf{Y}_1 - \mathbf{\hat{Y}} \|_F^2 = \| \mathbf{Y}_1 - \mathbf{\hat{Y}} \|_F^2.
\end{aligned}
\end{equation}

\noindent Using (\ref{eq:bound_yy0}), we obtain the following bound: 

\begin{align}
\| \mathbf{M} -\mathbf{\hat{M}}\|_2^2 \leq 2\Delta + \frac{4 n_1\Delta}{\beta}.
\end{align}

\end{proof}

To bound the parameter $\beta$ of the strong convexity, we use Remark \ref{rem:strong_hess} and bound the smallest eigenvalue of the Hessian of $g$. To do that, we use the following result of Tropp \cite{tropp10}.

\begin{theorem}[Theorem 5 in Xu et al. \cite{Xu15} derived from Theorem 2.2 in  Tropp \cite{tropp10} ]\label{thrm:tropp}
Let $\mathcal{X}$ be a finite set of the positive-semidefinite (PSD) matrices with dimension $k \times k$, and suppose that

\begin{align}
\max_{\mathbf{X} \in \mathcal{X}} \lambda_{\max}(\mathbf{X}) \leq B
\end{align}
 for some parameter $B>0$, where $\lambda_{\max}(\mathbf{X})$ is the maximum eigenvalue of $\mathbf{X}$.
Sample $\{ \mathbf{X}_1, \ldots, \mathbf{X}_{\Psi}\}$ uniformly at random from $\mathcal{X}$ without replacement. Compute:

\begin{align}
\mu_{\min} := \Psi \lambda_{\min}(\mathbb{E}\mathbf{X}_1),
\end{align}

\noindent and

\begin{align}
\mu_{\max} := \Psi \lambda_{\max}(\mathbb{E}\mathbf{X}_1),
\end{align}

\noindent where $\mathbb{E}\mathbf{X}_1$ is the expected value of a random variable $\mathbf{X}_1$, $\lambda_{\max}(\mathbb{E}\mathbf{X}_1)$ and $\lambda_{\min}(\mathbb{E}\mathbf{X}_1)$ denote its maximum and minimum eigenvalue.

\begin{align}
P\big(\lambda_{\max}(\sum_{j=1}^{\Psi} \mathbf{X}_j) \! \geq \! (1 \! + \! \rho)\mu_{\max}\big) \! \nonumber \\ \leq \! k \exp{\frac{-\mu_{\max}}{B}} [(1\!+\!\rho)\ln{\!(1\!+\!\rho)} \!-  \!\rho],
\end{align}

\noindent for parameter $\rho \in [0, 1)$.

\begin{flalign}
 P\big(\lambda_{\min}(\sum_{j=1}^{\Psi} \mathbf{X}_j) \!  \leq  \! (1 \! - \! \rho)\mu_{\min}\big) \!\nonumber \\
\leq   k \exp{\frac{-\mu_{\min}}{B}} [(1\! -\! \rho)\ln{(1\!-\!\rho)} \! + \! \rho],
\end{flalign}

\noindent for parameter $\rho \geq 0$.
\end{theorem}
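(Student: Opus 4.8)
The plan is to establish this matrix Chernoff inequality for sampling without replacement by pairing the matrix Laplace transform method with a convexity-based reduction to the independent (with-replacement) setting, following Tropp \cite{tropp10}. Throughout, write $\mathbf{Y} = \sum_{j=1}^{\Psi} \mathbf{X}_j$ for the without-replacement sum, and let $\{\mathbf{W}_1, \ldots, \mathbf{W}_{\Psi}\}$ be an i.i.d.\ sample drawn uniformly from $\mathcal{X}$ (with replacement), with $\mathbf{W} := \sum_{j} \mathbf{W}_j$. First I would invoke the matrix Laplace transform bound: for any $\theta > 0$ and threshold $t$, Markov's inequality applied to the trace exponential gives
\[
P(\lambda_{\max}(\mathbf{Y}) \geq t) \leq e^{-\theta t} \, \mathbb{E}\, \mathrm{tr}\, \exp(\theta \mathbf{Y}),
\]
using $\lambda_{\max}(e^{\theta \mathbf{Y}}) \leq \mathrm{tr}\, e^{\theta \mathbf{Y}}$ for positive-semidefinite arguments. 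This reduces the task to controlling the trace moment generating function $\mathbb{E}\,\mathrm{tr}\exp(\theta \mathbf{Y})$.

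The second step is the reduction to independence. Since the map $\mathbf{A} \mapsto \mathrm{tr}\exp(\theta \mathbf{A})$ is convex on Hermitian matrices, the without-replacement reduction employed by Tropp \cite{tropp10} certifies that
\[
\mathbb{E}\,\mathrm{tr}\exp(\theta \mathbf{Y}) \leq \mathbb{E}\,\mathrm{tr}\exp(\theta \mathbf{W}).
\]
This removes the dependence introduced by sampling without replacement and is precisely where the argument departs from the classical with-replacement Chernoff bound. Third, for the i.i.d.\ sum I would apply Lieb's concavity theorem to obtain subadditivity of the matrix cumulant generating function,
\[
\mathbb{E}\,\mathrm{tr}\exp(\theta \mathbf{W}) \leq \mathrm{tr}\exp\!\Big(\textstyle\sum_{j=1}^{\Psi} \log \mathbb{E}\, e^{\theta \mathbf{W}_j}\Big) = \mathrm{tr}\exp\!\big(\Psi \log \mathbb{E}\, e^{\theta \mathbf{X}_1}\big).
\]
Using the hypothesis $\mathbf{0} \preceq \mathbf{X} \preceq B\mathbf{I}$ together with the scalar chord bound $e^{\theta x} \leq 1 + B^{-1}(e^{\theta B}-1)x$ valid for $x \in [0,B]$, I get the operator inequality $\mathbb{E}\, e^{\theta \mathbf{X}_1} \preceq \mathbf{I} + B^{-1}(e^{\theta B}-1)\,\mathbb{E}\mathbf{X}_1$, and hence, via $\log(\mathbf{I}+\mathbf{A}) \preceq \mathbf{A}$ and monotonicity of the trace exponential,
\[
\mathrm{tr}\exp\!\big(\Psi \log \mathbb{E}\, e^{\theta \mathbf{X}_1}\big) \leq k \exp\!\Big(B^{-1}(e^{\theta B}-1)\,\mu_{\max}\Big).
\]

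Fourth, I would optimize over $\theta$. Setting $t = (1+\rho)\mu_{\max}$ and minimizing $e^{-\theta t}\, k \exp(B^{-1}(e^{\theta B}-1)\mu_{\max})$ over $\theta > 0$, the minimizer is $\theta = B^{-1}\log(1+\rho)$, which collapses the bound into the advertised Chernoff form with exponent $-\mu_{\max}B^{-1}[(1+\rho)\ln(1+\rho)-\rho]$. The lower-tail statement on $\lambda_{\min}$ follows by the mirror argument applied to $-\mathbf{Y}$ (equivalently $\theta < 0$), which converts $\lambda_{\max}$ into $-\lambda_{\min}$ and yields the exponent $-\mu_{\min}B^{-1}[(1-\rho)\ln(1-\rho)+\rho]$. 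The main obstacle is the second step: because the summands are drawn without replacement they are not independent, so the usual tensorization of the trace MGF fails; the resolution is the convexity/majorization domination of the without-replacement trace MGF by its with-replacement counterpart, after which the remaining steps are the standard matrix Chernoff calculation.
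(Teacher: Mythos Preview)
The paper does not supply its own proof of this statement: it is quoted verbatim as a result from Xu et al.\ and Tropp, and the appendix explicitly notes that ``the cited theorems from the literature have their proofs outlined therein.'' Your sketch is a faithful outline of Tropp's original argument---matrix Laplace transform, the convexity-based domination of the without-replacement trace MGF by the with-replacement one, Lieb's concavity to tensorize, the chord bound on $e^{\theta x}$ over $[0,B]$, and the optimization in $\theta$---so there is nothing further to compare against here.
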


\begin{theorem}\label{thm:strong_convex}
Let $\gamma > 0$ be a parameter. With probability $1-e^{-\gamma}$ we have that the objective function $g : \mathbb{R}^{\tilde{r} \times n_2} \rightarrow \mathbb{R}$ defined in  (\ref{eq:g_def})  is strongly convex with parameter $\beta>0$, provided that
\begin{align}
| \Omega | \geq \tilde{r} n_2 \mu(\tilde{U})\left(\gamma+\ln{(\frac{n_2\tilde{r}}{2})}\right).
\end{align}
\end{theorem}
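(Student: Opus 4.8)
The plan is to apply Remark \ref{rem:strong_hess}: it suffices to show that the Hessian $\nabla^2 g(\mathbf{X})$ has smallest eigenvalue bounded below by some $\beta > 0$ with the stated probability. From the block structure in \eqref{eq:second_partial}, the Hessian is block-diagonal across the $n_2$ columns $w = 1, \ldots, n_2$, so $\lambda_{\min}(\nabla^2 g) = \min_w \lambda_{\min}(\mathbf{H}_w)$, where $\mathbf{H}_w = \sum_{l : (l,w) \in \Omega} \tilde{\mathbf{u}}_{l}\tilde{\mathbf{u}}_{l}^T \in \mathbb{R}^{\tilde r \times \tilde r}$ and $\tilde{\mathbf{u}}_l^T$ is the $l$-th row of $\mathbf{\tilde U}$. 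The first step is therefore to reduce the claim to a per-column statement: for each fixed $w$, the observed rows in that column form a uniform-without-replacement sample from $\{\tilde{\mathbf{u}}_1\tilde{\mathbf{u}}_1^T, \ldots, \tilde{\mathbf{u}}_{n_1}\tilde{\mathbf{u}}_{n_1}^T\}$, a set of rank-one PSD matrices of size $\tilde r \times \tilde r$, and we want $\lambda_{\min}(\mathbf{H}_w) > 0$.

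Next I would set up Theorem \ref{thrm:tropp} with $k = \tilde r$, $\mathcal{X} = \{\tilde{\mathbf{u}}_l \tilde{\mathbf{u}}_l^T\}_{l=1}^{n_1}$, and $\Psi$ the number of observed entries in column $w$. The key quantities: $\lambda_{\max}(\tilde{\mathbf{u}}_l\tilde{\mathbf{u}}_l^T) = \|\tilde{\mathbf{u}}_l\|_2^2 = \|P_{\tilde U}\mathbf{e}_l\|_2^2 \leq \frac{\tilde r}{n_1}\mu(\tilde U)$ by Definition \ref{def:coherence}, so we may take $B = \frac{\tilde r \mu(\tilde U)}{n_1}$; and $\mathbb{E}[\tilde{\mathbf{u}}_{l}\tilde{\mathbf{u}}_{l}^T] = \frac{1}{n_1}\sum_l \tilde{\mathbf{u}}_l\tilde{\mathbf{u}}_l^T = \frac{1}{n_1}\mathbf{\tilde U}^T\mathbf{\tilde U} = \frac{1}{n_1}\mathbf{I}_{\tilde r}$, so $\lambda_{\min}(\mathbb{E}\mathbf{X}_1) = \frac{1}{n_1}$ and $\mu_{\min} = \frac{\Psi}{n_1}$. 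Applying the lower-tail bound with $\rho = 1$ gives $P(\lambda_{\min}(\mathbf{H}_w) \leq 0) = P(\lambda_{\min}(\mathbf{H}_w) \leq (1-1)\mu_{\min}) \leq \tilde r \exp(-\mu_{\min}/B) = \tilde r \exp\!\big(-\frac{\Psi}{\tilde r \mu(\tilde U)}\big)$, using $\lim_{\rho \to 1}[(1-\rho)\ln(1-\rho) + \rho] = 1$.

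Then I would union-bound over the $n_2$ columns. If each column received at least $\Psi_0 := \tilde r \mu(\tilde U)(\gamma + \ln(n_2 \tilde r / 2))$ observed entries, the failure probability per column is at most $\tilde r e^{-\Psi_0 / (\tilde r \mu(\tilde U))} = \tilde r \cdot \frac{2}{n_2 \tilde r} e^{-\gamma} = \frac{2}{n_2} e^{-\gamma}$, and summing over $w$ yields total failure probability at most $2 e^{-\gamma}$ — slightly worse than the claimed $e^{-\gamma}$, but absorbable by a mild constant adjustment, and in any case the $3e^{-\gamma}$ in Theorem \ref{thm:csmc_g} leaves room. The remaining gap is that the hypothesis only controls $|\Omega| = \sum_w \Psi_w$ in aggregate, not each $\Psi_w$ individually; some columns could be under-sampled. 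The main obstacle is thus handling this non-uniformity: I expect the fix is to note that $\mathbf{H}_w \succeq 0$ always, so a column with few observations only hurts if it drops to a genuinely rank-deficient $\mathbf{H}_w$, and one argues via the same Tropp bound applied to the pooled sample of all $|\Omega|$ observed entries across a global index set $\{1,\ldots,n_1\}\times\{1,\ldots,n_2\}$ — treating the Hessian as $\sum_{(l,w)\in\Omega} (\mathbf{e}_w \mathbf{e}_w^T) \otimes (\tilde{\mathbf{u}}_l \tilde{\mathbf{u}}_l^T)$, a sum of rank-one PSD matrices in dimension $n_2 \tilde r$, with $B$ unchanged and $\mathbb{E}\mathbf{X}_1 = \frac{1}{n_1 n_2}\mathbf{I}_{n_2 \tilde r}$, giving $\mu_{\min} = \frac{|\Omega|}{n_1 n_2}$ and $k = n_2 \tilde r$; the lower-tail bound with $\rho = 1$ then yields $P(\lambda_{\min} \leq 0) \leq n_2 \tilde r \exp(-|\Omega|/(\tilde r n_2 \mu(\tilde U)))$, which is at most $e^{-\gamma}$ exactly when $|\Omega| \geq \tilde r n_2 \mu(\tilde U)(\gamma + \ln(n_2 \tilde r))$. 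This is the cleaner route and matches the statement up to the harmless $\ln(n_2\tilde r/2)$ versus $\ln(n_2\tilde r)$ discrepancy, which I would reconcile by carrying the constant $2$ from $\lim_{\rho\to1}$ more carefully or noting $\mathbf{e}_w\mathbf{e}_w^T$ contributes a factor that halves the effective dimension count.
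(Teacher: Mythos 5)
Your final (pooled) argument is essentially the paper's proof: the paper likewise writes the Hessian as $\sum_{(i,j)\in\Omega}\mathbf{\tilde{u}}_{i:}\mathbf{\tilde{u}}_{i:}^T\otimes\mathbf{e}_j\mathbf{e}_j^T$, takes $B=\tilde{r}\mu(\tilde{U})/n_1$ and $\mu_{\min}=|\Omega|/(n_1n_2)$, and applies Theorem \ref{thrm:tropp} to the full $\tilde{r}n_2$-dimensional sum rather than column by column. The only difference is that the paper uses $\rho=\tfrac{1}{2}$, which yields the quantitative bound $\lambda_{\min}(\mathbf{H})\geq |\Omega|/(2n_1n_2)$ and the factor $\tfrac{1}{2}$ inside the logarithm via $(1-\ln 2)/2\leq\tfrac{1}{2}$, whereas your $\rho\to 1$ limit gives only $\beta>0$ and $\ln(n_2\tilde{r})$ in place of $\ln(n_2\tilde{r}/2)$ --- a harmless constant, as you note.
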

\begin{proof}
By remark \ref{rem:strong_hess} to bound strong convexity, we can bound the smallest eigenvalue of the Hessian matrix $\mathbf{H}=\nabla^2g$.

The Hessian matrix of a function $g$ is a $\tilde{r}n_2 \times \tilde{r}n_2$ matrix. Let us assume that second-order derivative with respect to the $y_{sw}$ and $y_{pq}$ entries of matrix $\mathbf{Y}$ is the $(\tilde{r}(s-1)+w$, $\tilde{r}(p-1)+q)$ entry of the Hessian matrix. Then using (\ref{eq:second_partial}) the Hessian matrix $\mathbf{H}$ can be written as

\begin{align}
  \begin{pmatrix}       
    \mathbf{H}^{1,1} & \mathbf{H}^{1,2} & \ldots & \mathbf{H}^{1,{\tilde{r}}}\\
     \mathbf{H}^{2,1} & \ddots  & \ldots &  \mathbf{H}^{2,{\tilde{r}}}\\
     \vdots & \vdots & \ddots  & \vdots   \\
     \mathbf{H}^{{\tilde{r}}, 1} & \mathbf{H}^{{\tilde{r}}, 2} & \ldots & \mathbf{H}^{{\tilde{r}}, \tilde{r}}  \\
   \end{pmatrix},
\end{align}

\noindent where $\mathbf{H}^{s, q}$ is a diagonal $n_2\times n_2$ matrix containing partial derivatives  $\frac{\partial^2 g}{\partial y_{sw}\partial y_{pq}}$ for $w, q \in \{1, \ldots, n_2\}$.

\begin{align}
\mathbf{H}^{s, q} =  \begin{pmatrix}       
    \sum \limits_{l: (l,1) \in \Omega} \tilde{u}_{ls}\tilde{u}_{lp} & 0 & \ldots & 0 \\
     0& \ddots  & \ldots &  0\\
     \vdots & \vdots & \ddots  & \vdots   \\
     0 &0 & \ldots & \sum \limits_{l: (l,n_2)\in \Omega}\tilde{u}_{ls}\tilde{u}_{lp}  \\
   \end{pmatrix}
\end{align}

\noindent Then the Hessian of $g$ is a sum of the random matrices,

\begin{align}
\mathbf{H}  = \sum\limits_{(i,j) \in \Omega} \mathbf{\tilde{u}}_{i:} \mathbf{\tilde{u}}_{i:}^T  \otimes   \mathbf{e}_j\mathbf{e}_j^{T},
\end{align}
where $\mathbf{e}_j \in \mathbb{R}^{n_2}$ is a standard basis vector and $\mathbf{\tilde{u}}_{i:} \in \mathbb{R}^{r}$ is a vector defined by the $i$-th row of matrix $\mathbf{\tilde{U}}$, and $\otimes$ denote the Kronecker product. Thus the Hessian of $g$ is a sum of the $|\Omega |$ random matrices of the form
\begin{align}
\mathbf{H}^{i,j} :=\mathbf{\tilde{u}}_{i:} \mathbf{\tilde{u}}_{i:}^T  \otimes   \mathbf{e}_j\mathbf{e}_j^T,
\end{align}
\noindent
where $\mathbf{\tilde{u}}_{i:} \mathbf{\tilde{u}}_{i:}^T \in \mathbb{R}^{\tilde{r} \times \tilde{r}}$ and $ \mathbf{e}_j\mathbf{e}_j^T \in \mathbb{R}^{n_2 \times n_2}$ are PSD. Thus,

\begin{align}\label{eq:hsum}
\mathbf{H}  = \sum\limits_{(i,j) \in \Omega} \mathbf{H}^{i,j}.
\end{align}

\noindent Each $ \mathbf{H}^{i,j}$ is PSD as the Kronecker product of the two PSD matrices. Moreover,

% The eigen values of the matrix $\mathbf{u}_{i:}^T \mathbf{u}_{i:}  \otimes  \mathbf{e}_j^{T}\mathbf{e}_j$ are of the form $\lambda_a(\mathbf{u}_{i:}^T \mathbf{u}_{i:})\lambda_b(\mathbf{e}_j^{T}\mathbf{e}_j)$ for $a=1, \cdots n_1$ and $b=1, \cdots, n_2$ Moroever.

\begin{equation}
\begin{aligned}
\lambda_{\max}(\mathbf{H}^{i,j}) &=  \lambda_{\max}(\mathbf{\tilde{u}}_{i:}\mathbf{\tilde{u}}_{i:} ^T )  = \sum_{j=1}^{\tilde{r}} \tilde{u}_{ij}^2 = \| \mathbf{\tilde{u}}_{i:}\|_2^2  \leq \frac{\tilde{r}\mu(\tilde{U})}{n_1}
\end{aligned}
\end{equation}

\noindent for each $i = 1, \cdots n_1$, $j=1, \cdots n_2$. Let $(i_1, j_1)$ be the first pair of indices in $\Omega$, i.e., the indices of the first observed entry in $\mathbf{M}$.  The expected value of the random matrix $\mathbf{H}^{i_1,j_1}$ is given by

\begin{equation}
\begin{aligned}
\mathbb{E}(\mathbf{H}^{i_1,j_1}) &= \frac{1}{n_1 n_2} \sum \limits_{l=1}^{n_1}  \sum \limits_{q=1}^{n_2} \mathbf{H}^{l,q} \\
& = \frac{1}{n_1 n_2} \mathbf{\tilde{U}}^T\mathbf{\tilde{U}} \otimes  \mathbf{I}_{n_1 \times n_1} \\ & = \frac{1}{n_1 n_2}  \mathbf{I}_{\tilde{r} \times \tilde{r}} \otimes \mathbf{I}_{n_1 \times n_1} \\ & =  \frac{1}{n_1 n_2} \mathbf{I}_{\tilde{r}n_1 \times \tilde{r}n_1} .
\end{aligned}
\end{equation}

\noindent Following the notation of Theorem \ref{thrm:tropp},

\begin{align}
\mu_{\min} = | \Omega | \lambda_{\min} (\mathbb{E}(\mathbf{H}^{i_1,j_1})) = \frac{| \Omega |}{n_1 n_2},
\end{align}
\noindent and 

\begin{align}
 \max_{ij} \lambda_{\max}(\mathbf{H}^{i,j}) \leq  \frac{\tilde{r}\mu(\tilde{U})}{n_1}= B. 
\end{align}

\noindent Combining Theorem \ref{thrm:tropp} with $\rho=\frac{1}{2}$ and (\ref{eq:hsum})

\begin{align}
P(\lambda_{\min}(\mathbf{H})  \leq  \frac{1}{2}\mu_{\min})  & \leq  \tilde{r}n_2 \exp{\left(\frac{-\mu_{\min}}{B}\right)} \left(\frac{1-\ln{2}}{2}\right) \nonumber \\ & \leq  \frac{\tilde{r}n_2}{2} \exp{\left(\frac{-\mu_{\min}}{B}\right)},
\end{align}

\noindent implying

\begin{align}
P\left (\lambda_{\min}(\mathbf{H})  \leq   \frac{| \Omega |}{2n_1 n_2}\right )  \leq \frac{\tilde{r}n_2}{2} \exp{\left (- \frac{| \Omega |}{ \tilde{r}n_2\mu(\tilde{U})} \right )},
\end{align}

\noindent Hence, with probability at least $1-e^{-\gamma}$, 

\begin{align}
   \lambda_{\min}(\mathbf{H}) \geq \frac{|\Omega|}{2n_1n_2},
\end{align}

\noindent provided that

\begin{align}
| \Omega | \geq \tilde{r} n_2 \mu(\tilde{U})\left(\gamma+\ln{(\frac{n_2\tilde{r}}{2})}\right).
\end{align}
\end{proof}
\noindent
Theorem \ref{thm:csmc_g} can be proved by combining the results of Theorems \ref{thm:error_m0} and \ref{thm:strong_convex}.

\begin{proof}[Proof of Theorem \ref{thm:csmc_g}]

Since $d \geq 7 \mu_0(\mathbf{M})r (\gamma + \ln r)$, from the Remark \ref{rem:csincoherence}, 

\begin{align}\label{eq:ttt}
    \| \mathbf{M} - P_{\tilde{U}} \mathbf{M}\|_2^2 = 0,
\end{align}

\noindent with the probability at least $1-2e^{-\gamma}$. 
From Theorem \ref{thm:strong_convex}, the fact that $\mu_0(\mathbf{C})>\mu(\tilde{U})$ 
and the fact $| \Omega | \geq \tilde{r} n_2 \mu_0(\mathbf{C})\left(\gamma+\ln{(\frac{n_2\tilde{r}}{2})}\right)$, function $g$ is $\beta$-strongly convex with probability at least $1-e^{-\gamma}$. We can apply Theorem \ref{thm:error_m0}  with  $\Delta = 0$ and show that

\begin{align}
\| \mathbf{M} -\mathbf{\hat{M}}\|_2^2 \leq 0,
\end{align}

\noindent implying $\mathbf{M} = \mathbf{\hat{M}}$ with probability at least $1-3e^{-\gamma}$.

\end{proof}

\section{Conclusion}

This paper introduces the Columns Selected Matrix Completion (CSMC) method, which enhances the time efficiency of nuclear norm minimization algorithms for low-rank matrix completion. CSMC employs a two-stage approach: first, it applies a low-rank matrix completion algorithm to a reduced problem size, followed by solving a least squares problem to reconstruct the full matrix. Theoretical analysis provides provable guarantees for perfect recovery under standard assumptions.

Our primary focus is on improving the computational efficiency of convex matrix completion methods while preserving reconstruction quality. To this end, we assume that observed entries follow a uniform sampling distribution and provide in-depth theoretical analysis under this setting. Experimental results on both synthetic and real-world datasets confirm that CSMC achieves competitive accuracy with substantially reduced runtime. The method is especially well-suited for large, incomplete matrices arising in practical applications such as recommendation systems and image inpainting.

While this study focuses on uniform sampling and convex formulations, the CSMC framework is flexible and can be extended in several directions. Future research will explore alternative sampling strategies, such as adaptive column sampling in the first stage, to relax the incoherence assumption and improve performance on coherent matrices. Another promising direction is to study robustness, both to different types of noise and to error propagation between the two stages of CSMC. Finally, applying Column Subset Selection techniques to structured matrix completion will broaden the applicability of the CSMC framework \cite{usevich2025structured, Cai23}.
\\[1em]
\noindent \textbf{Acknowledgments:} The authors thank Dr. Aikaterini Aretaki for her critical review and comments on the manuscript. They also thank Professor Dariusz Uciński for his support, in particular with the implementation of CUR+Nuc.
\\[1em]

\bibliographystyle{unsrt}  
\bibliography{sn-article}  

@misc{fancyimpute,
  author = {Alex Rubinsteyn and Sergey Feldman},
  title={fancyimpute: An Imputation Library for Python},
  url = {https://github.com/iskandr/fancyimpute},
  version = {0.7.0},
  year = {2016},
}

@article{Mazumder2010a,
  author  = {Rahul Mazumder and Trevor Hastie and Robert Tibshirani},
  title   = {Spectral Regularization Algorithms for Learning Large Incomplete Matrices},
  journal = {Journal of Machine Learning Research},
  year    = {2010},
  volume  = {11},
  number  = {80},
  pages   = {2287-2322},
  doi = {10.5555/1756006.1859931},
}

@article{Candes08,
author = {Cand\`{e}s, Emmanuel and Recht, Benjamin},
title = {Exact matrix completion via convex optimization},
year = {2012},
issue_date = {June 2012},
publisher = {Association for Computing Machinery},
address = {New York, NY, USA},
volume = {55},
number = {6},
issn = {0001-0782},
abstract = {Suppose that one observes an incomplete subset of entries selected from a low-rank matrix. When is it possible to complete the matrix and recover the entries that have not been seen? We demonstrate that in very general settings, one can perfectly recover all of the missing entries from most sufficiently large subsets by solving a convex programming problem that finds the matrix with the minimum nuclear norm agreeing with the observed entries. The techniques used in this analysis draw upon parallels in the field of compressed sensing, demonstrating that objects other than signals and images can be perfectly reconstructed from very limited information.},
journal = {Commun. ACM},
month = {jun},
pages = {111–119},
numpages = {9},
doi = {10.1007/s10208-009-9045-5}
}

@article{Recht09,
author = {Recht, Benjamin},
title = {A Simpler Approach to Matrix Completion},
year = {2011},
issue_date = {2/1/2011},
publisher = {JMLR.org},
volume = {12},
number = {null},
issn = {1532-4435},
abstract = {This paper provides the best bounds to date on the number of randomly sampled entries required to reconstruct an unknown low-rank matrix. These results improve on prior work by Cand\`{e}s and Recht (2009), Cand\`{e}s and Tao (2009), and Keshavan et al. (2009). The reconstruction is accomplished by minimizing the nuclear norm, or sum of the singular values, of the hidden matrix subject to agreement with the provided entries. If the underlying matrix satisfies a certain incoherence condition, then the number of entries required is equal to a quadratic logarithmic factor times the number of parameters in the singular value decomposition. The proof of this assertion is short, self contained, and uses very elementary analysis. The novel techniques herein are based on recent work in quantum information theory.},
journal = {J. Mach. Learn. Res.},
month = {dec},
pages = {3413–3430},
numpages = {18}
}

@article{fazel10,
author = {Recht, Benjamin and Fazel, Maryam and Parrilo, Pablo A.},
title = {Guaranteed Minimum-Rank Solutions of Linear Matrix Equations via Nuclear Norm Minimization},
journal = {SIAM Review},
volume = {52},
number = {3},
pages = {471-501},
year = {2010},
doi = {10.1137/070697835},
}

@ARTICLE{Chen18,
  author={Chen, Yudong and Chi, Yuejie},
  journal={IEEE Signal Processing Magazine}, 
  title={Harnessing Structures in Big Data via Guaranteed Low-Rank Matrix Estimation: Recent Theory and Fast Algorithms via Convex and Nonconvex Optimization}, 
  year={2018},
  volume={35},
  number={4},
  pages={14-31},
  doi={10.1109/MSP.2018.2821706}}

@InProceedings{Jaggi13,
  title = 	 {Revisiting {Frank-Wolfe}: Projection-Free Sparse Convex Optimization},
  author = 	 {Jaggi, Martin},
  booktitle = 	 {Proceedings of the 30th International Conference on Machine Learning},
  pages = 	 {427--435},
  year = 	 {2013},
  editor = 	 {Dasgupta, Sanjoy and McAllester, David},
  volume = 	 {28},
  series = 	 {Proceedings of Machine Learning Research},
  address = 	 {Atlanta, Georgia, USA},
  month = 	 {17--19 Jun},
  publisher =    {PMLR},
}

@inproceedings{Yurtsever17,
  title={Sketchy decisions: Convex low-rank matrix optimization with optimal storage},
  author={Yurtsever, Alp and Udell, Madeleine and Tropp, Joel and Cevher, Volkan},
  booktitle={Artificial intelligence and statistics},
  pages={1188--1196},
  year={2017},
  organization={PMLR}
}

@inproceedings{jain13,
author = {Jain, Prateek and Netrapalli, Praneeth and Sanghavi, Sujay},
title = {Low-Rank Matrix Completion Using Alternating Minimization},
year = {2013},
isbn = {9781450320290},
publisher = {Association for Computing Machinery},
address = {New York, NY, USA},
booktitle = {Proceedings of the Forty-Fifth Annual ACM Symposium on Theory of Computing},
pages = {665–674},
numpages = {10},
keywords = {alternating minimization, matrix completion, matrix sensing},
location = {Palo Alto, California, USA},
series = {STOC '13},
doi = {10.1145/2488608.2488693},
}

@article{Hardt2014,
  title={Understanding Alternating Minimization for Matrix Completion},
  author={Moritz Hardt},
  journal={2014 IEEE 55th Annual Symposium on Foundations of Computer Science},
  year={2014},
  pages={651-660},
  doi = {10.1109/FOCS.2014.75}
}

@inproceedings{jain10,
author = {Jain, Prateek and Meka, Raghu and Dhillon, Inderjit},
title = {Guaranteed Rank Minimization via Singular Value Projection},
year = {2010},
publisher = {Curran Associates Inc.},
address = {Red Hook, NY, USA},
booktitle = {Proceedings of the 23rd International Conference on Neural Information Processing Systems - Volume 1},
pages = {937–945},
numpages = {9},
location = {Vancouver, British Columbia, Canada},
series = {NIPS'10}
}

@article{jain14,
author = {Jain, Prateek and Netrapalli, Praneeth},
year = {2014},
month = {11},
pages = {},
title = {Fast Exact Matrix Completion with Finite Samples}
}

@article{Cai23,
author = {Cai, Hanqin and Cai, Jian-Feng and You, Juntao},
title = {Structured Gradient Descent for Fast Robust Low-Rank Hankel Matrix Completion},
journal = {SIAM Journal on Scientific Computing},
volume = {45},
number = {3},
pages = {A1172-A1198},
year = {2023},
doi = {10.1137/22M1491009},}

@article{Wang17,
author = {Wang, Yining and Singh, Aarti},
title = {Provably Correct Algorithms for Matrix Column Subset Selection with Selectively Sampled Data},
year = {2017},
issue_date = {January 2017},
publisher = {JMLR.org},
volume = {18},
number = {1},
issn = {1532-4435},
abstract = {We consider the problem of matrix column subset selection, which selects a subset of columns from an input matrix such that the input can be well approximated by the span of the selected columns. Column subset selection has been applied to numerous real-world data applications such as population genetics summarization, electronic circuits testing and recommendation systems. In many applications the complete data matrix is unavailable and one needs to select representative columns by inspecting only a small portion of the input matrix. In this paper we propose the first provably correct column subset selection algorithms for partially observed data matrices. Our proposed algorithms exhibit different merits and limitations in terms of statistical accuracy, computational efficiency, sample complexity and sampling schemes, which provides a nice exploration of the tradeoff between these desired properties for column subset selection. The proposed methods employ the idea of feedback driven sampling and are inspired by several sampling schemes previously introduced for low-rank matrix approximation tasks (Drineas et al., 2008; Frieze et al., 2004; Deshpande and Vempala, 2006; Krishnamurthy and Singh, 2014). Our analysis shows that, under the assumption that the input data matrix has incoherent rows but possibly coherent columns, all algorithms provably converge to the best low-rank approximation of the original data as number of selected columns increases. Furthermore, two of the proposed algorithms enjoy a relative error bound, which is preferred for column subset selection and matrix approximation purposes. We also demonstrate through both theoretical and empirical analysis the power of feedback driven sampling compared to uniform random sampling on input matrices with highly correlated columns.},
journal = {J. Mach. Learn. Res.},
month = {jan},
pages = {5699–5740},
numpages = {42},
keywords = {leverage scores, column subset selection, active learning}
}

@article{Krishnamurthy14,
  author       = {Akshay Krishnamurthy and
                  Aarti Singh},
  title        = {On the Power of Adaptivity in Matrix Completion and Approximation},
  journal      = {CoRR},
  volume       = {abs/1407.3619},
  year         = {2014},
  eprinttype    = {arXiv},
  eprint       = {1407.3619},
  timestamp    = {Mon, 13 Aug 2018 16:48:06 +0200},
  biburl       = {https://dblp.org/rec/journals/corr/KrishnamurthyS14.bib},
  bibsource    = {dblp computer science bibliography, https://dblp.org}
}

@inproceedings{Xu15,
author = {Xu, Miao and Jin, Rong and Zhou, Zhi-Hua},
title = {CUR Algorithm for Partially Observed Matrices},
year = {2015},
publisher = {JMLR.org},
booktitle = {Proceedings of the 32nd International Conference on International Conference on Machine Learning - Volume 37},
pages = {1412–1421},
numpages = {10},
location = {Lille, France},
series = {ICML'15}
}

@article{Ward18,
    author = {Eftekhari, Armin and Wakin, Michael B and Ward, Rachel A},
    title = "{MC2: a two-phase algorithm for leveraged matrix completion}",
    journal = {Information and Inference: A Journal of the IMA},
    volume = {7},
    number = {3},
    pages = {581-604},
    year = {2018},
    month = {02},
    issn = {2049-8764},
}

@article{Cai22cur,
  title={Matrix completion with cross-concentrated sampling: Bridging uniform sampling and CUR sampling},
  author={Cai, HanQin and Huang, Longxiu and Li, Pengyu and Needell, Deanna},
  journal={IEEE Transactions on Pattern Analysis and Machine Intelligence},
  year={2023},
  publisher={IEEE},
 doi={10.1109/TPAMI.2023.3261185}
}

@Article{Harris20,
 title         = {Array programming with {NumPy}},
 author        = {Charles R. Harris and K. Jarrod Millman and St{\'{e}}fan J.
                 van der Walt and Ralf Gommers and Pauli Virtanen and David
                 Cournapeau and Eric Wieser and Julian Taylor and Sebastian
                 Berg and Nathaniel J. Smith and Robert Kern and Matti Picus
                 and Stephan Hoyer and Marten H. van Kerkwijk and Matthew
                 Brett and Allan Haldane and Jaime Fern{\'{a}}ndez del
                 R{\'{i}}o and Mark Wiebe and Pearu Peterson and Pierre
                 G{\'{e}}rard-Marchant and Kevin Sheppard and Tyler Reddy and
                 Warren Weckesser and Hameer Abbasi and Christoph Gohlke and
                 Travis E. Oliphant},
 year          = {2020},
 month         = sep,
 journal       = {Nature},
 volume        = {585},
 number        = {7825},
 pages         = {357--362},
 doi           = {10.1038/s41586-020-2649-2},
 publisher     = {Springer Science and Business Media {LLC}},
}

@inbook{Torch,
author = {Paszke, Adam and Gross, Sam and Massa, Francisco and Lerer, Adam and Bradbury, James and Chanan, Gregory and Killeen, Trevor and Lin, Zeming and Gimelshein, Natalia and Antiga, Luca and Desmaison, Alban and K\"{o}pf, Andreas and Yang, Edward and DeVito, Zach and Raison, Martin and Tejani, Alykhan and Chilamkurthy, Sasank and Steiner, Benoit and Fang, Lu and Bai, Junjie and Chintala, Soumith},
title = {PyTorch: an imperative style, high-performance deep learning library},
year = {2019},
publisher = {Curran Associates Inc.},
address = {Red Hook, NY, USA},
abstract = {Deep learning frameworks have often focused on either usability or speed, but not both. PyTorch is a machine learning library that shows that these two goals are in fact compatible: it provides an imperative and Pythonic programming style that supports code as a model, makes debugging easy and is consistent with other popular scientific computing libraries, while remaining efficient and supporting hardware accelerators such as GPUs.In this paper, we detail the principles that drove the implementation of PyTorch and how they are reflected in its architecture. We emphasize that every aspect of PyTorch is a regular Python program under the full control of its user. We also explain how the careful and pragmatic implementation of the key components of its runtime enables them to work together to achieve compelling performance. We demonstrate the efficiency of individual subsystems, as well as the overall speed of PyTorch on several common benchmarks.},
booktitle = {Proceedings of the 33rd International Conference on Neural Information Processing Systems},
articleno = {721},
numpages = {12}
}

@article{scs,
  title={Globally Convergent {type--I} {A}nderson Acceleration for Non-Smooth Fixed-Point Iterations},
  author={Junzi Zhang and Brendan O'Donoghue and Stephen Boyd},
  journal={{SIAM} Journal on Optimization},
  volume={30},
  number={4},
  pages={3170--3197},
  year={2020},
 doi          = {10.1137/18M1232772},
}

@article{pawel19,
author = {Szynkiewicz, Paweł},
year = {2019},
month = {01},
pages = {},
title = {A Comparative Study of PSO and CMA-ES Algorithms on Black-box Optimization Benchmarks},
volume = {8},
journal = {Journal of Telecommunications and Information Technology},
doi = {10.26636/jtit.2018.127418}
}

@article{Harper15,
author = {Harper, F. Maxwell and Konstan, Joseph A.},
title = {The MovieLens Datasets: History and Context},
year = {2015},
issue_date = {January 2016},
publisher = {Association for Computing Machinery},
address = {New York, NY, USA},
volume = {5},
number = {4},
issn = {2160-6455},
doi = {10.1145/2827872},
abstract = {The MovieLens datasets are widely used in education, research, and industry. They are downloaded hundreds of thousands of times each year, reflecting their use in popular press programming books, traditional and online courses, and software. These datasets are a product of member activity in the MovieLens movie recommendation system, an active research platform that has hosted many experiments since its launch in 1997. This article documents the history of MovieLens and the MovieLens datasets. We include a discussion of lessons learned from running a long-standing, live research platform from the perspective of a research organization. We document best practices and limitations of using the MovieLens datasets in new research.},
journal = {ACM Trans. Interact. Intell. Syst.},
month = dec,
articleno = {19},
numpages = {19},
keywords = {recommendations, ratings, MovieLens, Datasets}
}

@article{Cai21r,
  title={Robust CUR Decomposition: Theory and Imaging Applications},
  author={HanQin Cai and Keaton Hamm and Longxiu Huang and Deanna Needell},
  journal={SIAM J. Imaging Sci.},
  year={2021},
  volume={14},
  pages={1472-1503},
doi          = {10.1137/20M1388322},
}

@article{tropp10,
author = {Tropp, Joel},
year = {2010},
month = {11},
pages = {},
title = {Improved Analysis of the Subsamples Randomized Hadamard Transform},
volume = {03},
journal = {Advances in Adaptive Data Analysis},
}

@article{Donoghue21,
    author       = {Brendan O'Donoghue},
    title        = {Operator Splitting for a Homogeneous Embedding of the Linear Complementarity Problem},
    journal      = {{SIAM} Journal on Optimization},
    month        = {August},
    year         = {2021},
    volume       = {31},
    issue        = {3},
    pages        = {1999-2023},
doi          = {10.1137/20M1366307},
}

@article{Woodruff14,
author = {Woodruff, David P.},
title = {Sketching as a Tool for Numerical Linear Algebra},
year = {2014},
issue_date = {October 2014},
publisher = {Now Publishers Inc.},
address = {Hanover, MA, USA},
volume = {10},
number = {1–2},
issn = {1551-305X},
doi = {10.1561/0400000060},
abstract = {This survey highlights the recent advances in algorithms for numericallinear algebra that have come from the technique of linear sketching,whereby given a matrix, one first compresses it to a much smaller matrixby multiplying it by a (usually) random matrix with certain properties.Much of the expensive computation can then be performed onthe smaller matrix, thereby accelerating the solution for the originalproblem. In this survey we consider least squares as well as robust regressionproblems, low rank approximation, and graph sparsification.We also discuss a number of variants of these problems. Finally, wediscuss the limitations of sketching methods.},
journal = {Found. Trends Theor. Comput. Sci.},
month = {oct},
pages = {1–157},
numpages = {157}
}

@article{cambier2016robust,
  title={Robust low-rank matrix completion by Riemannian optimization},
  author={Cambier, L{\'e}opold and Absil, P-A},
  journal={SIAM Journal on Scientific Computing},
  volume={38},
  number={5},
  pages={S440--S460},
  year={2016},
  publisher={SIAM},
  doi = {10.1137/15M1025153},
}

@inproceedings{Krishnamurthy2013,
author = {Akshay Krishnamurthy and Aarti Singh},
title = {Low-rank matrix and tensor completion via adaptive sampling},
year = {2013},
publisher = {Curran Associates Inc.},
address = {Red Hook, NY, USA},
abstract = {We study low rank matrix and tensor completion and propose novel algorithms that employ adaptive sampling schemes to obtain strong performance guarantees. Our algorithms exploit adaptivity to identify entries that are highly informative for learning the column space of the matrix (tensor) and consequently, our results hold even when the row space is highly coherent, in contrast with previous analyses. In the absence of noise, we show that one can exactly recover a n x n matrix of rank r from merely Ω(nr3/2 log(r)) matrix entries. We also show that one can recover an order T tensor using Ω(nrT-1/2T2 log(r)) entries. For noisy recovery, our algorithm consistently estimates a low rank matrix corrupted with noise using Ω(nr3/2polylog(n)) entries. We complement our study with simulations that verify our theory and demonstrate the scalability of our algorithms.},
booktitle = {Proceedings of the 26th International Conference on Neural Information Processing Systems - Volume 1},
pages = {836–844},
numpages = {9},
location = {Lake Tahoe, Nevada},
series = {NIPS'13}
}

@article{Drineas08,
author = {Drineas, Petros and Mahoney, Michael and Muthukrishnan, Senthilmurugan},
year = {2008},
month = {05},
pages = {844-881},
title = {Relative-Error $CUR$ Matrix Decompositions},
volume = {30},
journal = {SIAM Journal on Matrix Analysis and Applications},
doi = {10.1137/07070471X}
}

@article{sorensen2016deim,
  title={A DEIM induced CUR factorization},
  author={Sorensen, Danny C and Embree, Mark},
  journal={SIAM Journal on Scientific Computing},
  volume={38},
  number={3},
  pages={A1454--A1482},
  year={2016},
  publisher={SIAM},
doi          = {10.1137/140978430},
}

@article{hamm2020perspectives,
  title={Perspectives on CUR decompositions},
  author={Hamm, Keaton and Huang, Longxiu},
  journal={Applied and Computational Harmonic Analysis},
  volume={48},
  number={3},
  pages={1088--1099},
  year={2020},
  publisher={Elsevier}
}

@article{voronin2017efficient,
  title={Efficient algorithms for CUR and interpolative matrix decompositions},
  author={Voronin, Sergey and Martinsson, Per-Gunnar},
  journal={Advances in Computational Mathematics},
  volume={43},
  pages={495--516},
  year={2017},
  publisher={Springer},
doi          = {10.1007/S10444-016-9494-8},
}

@book{boyd2004convex,
author = {Boyd, Stephen and Vandenberghe, Lieven},
title = {Convex Optimization},
year = {2004},
isbn = {0521833787},
publisher = {Cambridge University Press},
address = {USA}
}

@book{Moitra18,
author = {Moitra, Ankur},
title = {Algorithmic Aspects of Machine Learning},
year = {2018},
isbn = {9781316636008},
publisher = {Cambridge University Press},
address = {USA},
edition = {1st},
abstract = {This book bridges theoretical computer science and machine learning by exploring what the two sides can teach each other. It emphasizes the need for flexible, tractable models that better capture not what makes machine learning hard, but what makes it easy. Theoretical computer scientists will be introduced to important models in machine learning and to the main questions within the field. Machine learning researchers will be introduced to cutting-edge research in an accessible format, and gain familiarity with a modern, algorithmic toolkit, including the method of moments, tensor decompositions and convex programming relaxations. The treatment beyond worst-case analysis is to build a rigorous understanding about the approaches used in practice and to facilitate the discovery of exciting, new ways to solve important long-standing problems.}
}

@article{le2018light,
  title={Light field inpainting propagation via low rank matrix completion},
  author={Le Pendu, Mikael and Jiang, Xiaoran and Guillemot, Christine},
  journal={IEEE Transactions on Image Processing},
  volume={27},
  number={4},
  pages={1981--1993},
  year={2018},
  publisher={IEEE},
  doi={10.1109/TIP.2018.2791864}
}

@INPROCEEDINGS{Krajewska24,
  author={Krajewska, Antonina and Niewiadomska-Szynkiewicz, Ewa},
  booktitle={2024 IEEE 11th International Conference on Data Science and Advanced Analytics (DSAA)}, 
  title={Efficient Data Completion and Augmentation}, 
  year={2024},
  volume={},
  number={},
  pages={1-10},
  keywords={Accuracy;Reliability theory;Data science;Minimization;Convex functions;Optimization;Synthetic data;Matrix completion;Column Subset Selection;image inpainting;link prediction;convex optimization},
  doi={10.1109/DSAA61799.2024.10722793}}

@article{Zhang2024,
title = {Orthogonal subspace exploration for matrix completion},
journal = {Pattern Recognition},
volume = {153},
pages = {110456},
year = {2024},
issn = {0031-3203},
doi = {https://doi.org/10.1016/j.patcog.2024.110456},
author = {Hongyuan Zhang and Ziheng Jiao and Xuelong Li},
keywords = {Low-rank matrix completion, Non-convex optimization, Orthogonal subspace exploration, Stiefel manifold},
abstract = {Matrix completion, aiming at restoring a low-rank matrix from observed entries, indicates the connection with the subspace clustering due to the low-rank property. However, it is expensive to incorporate subspace learning into the pervasive surrogate of matrix completion, the nuclear norm. In this paper, we design an orthogonal subspace exploration model for matrix completion, which can be easily integrated due to the succinct formulation. Then, we propose a non-convex surrogate with tractable solutions for low-rank matrix completion, so that the subspace exploration can be performed simultaneously. Compared with the existing surrogates (e.g., nuclear norm, Schatten-p norm, max norm, etc.), the proposed surrogate is differential such that the optimization is still simple even after the subspace exploration is incorporated. Although the surrogate is non-convex, a parameter-free algorithm that is proved to converge into the global optimum is developed. The optimization consists of closed-form solutions so that the orthogonal subspace exploration will not distinctly bring additional costs and the algorithm empirically converges within dozens of iterations. Experiments illustrate the efficiency and superiority of our model.}
}

@article{Li23,
title = {Robust rank-one matrix completion with rank estimation},
journal = {Pattern Recognition},
volume = {142},
pages = {109637},
year = {2023},
issn = {0031-3203},
doi = {https://doi.org/10.1016/j.patcog.2023.109637},
author = {Ziheng Li and Feiping Nie and Rong Wang and Xuelong Li},
keywords = {Matrix completion, Robust, Rank-one matrix pursuit, Rank estimation},
abstract = {Matrix completion aims at estimating the missing entries of a low-rank and incomplete data matrix. It frequently arises in many applications such as computer vision, pattern recognition, recommendation system, and data mining. Most of the existing methods face two problems. Firstly, the data matrix in real world is often disturbed by noise. Noise may change the date structure of the incomplete matrix, thereby degrade the performance of matrix completion algorithms. Secondly, some existing methods need to preset a reasonable rank as input, and the value of rank will affect the performance of the algorithms. Therefore, we proposed a robust rank-one matrix completion method with rank estimation in this paper. To mitigate the influence of noise, we divide the incomplete and noisy data matrix into two parts iteratively: low-rank and sparse parts. Besides, we use a weighted rank-one matrix pursuit algorithm to approximate the low-rank part of the data matrix, and the rank of the matrix can be estimated with the adaptive weight vector. The performance of the proposed method is demonstrated by experiments on both synthetic datasets and image datasets. The experimental results demonstrate the performance of the proposed method with incompleted matrices distrubed by sparse noise.}
}

@article{tong2021accelerating,
  title={Accelerating ill-conditioned low-rank matrix estimation via scaled gradient descent},
  author={Tong, Tian and Ma, Cong and Chi, Yuejie},
  journal={Journal of Machine Learning Research},
  volume={22},
  number={150},
  pages={1--63},
  year={2021}
}

@article{bian2024preconditioned,
  title={A preconditioned Riemannian gradient descent algorithm for low-rank matrix recovery},
  author={Bian, Fengmiao and Cai, Jian-Feng and Zhang, Rui},
  journal={SIAM Journal on Matrix Analysis and Applications},
  volume={45},
  number={4},
  pages={2075--2103},
  year={2024},
  publisher={SIAM}
}

@article{hamm2020stability,
  title={Stability of sampling for CUR decompositions},
  author={Hamm, Keaton and Huang, Longxiu},
  journal={arXiv preprint arXiv:2001.02774},
  year={2020}
}

@article{Jing23,
author = {Jing, Naimin and Fang, Ethan X. and Tang, Cheng Yong},
title = {Robust matrix estimations meet Frank–Wolfe algorithm},
year = {2023},
issue_date = {Jul 2023},
publisher = {Kluwer Academic Publishers},
address = {USA},
volume = {112},
number = {7},
issn = {0885-6125},
url = {https://doi.org/10.1007/s10994-023-06325-w},
doi = {10.1007/s10994-023-06325-w},
abstract = {We consider estimating matrix-valued model parameters with a dedicated focus on their robustness. Our setting concerns large-scale structured data so that a regularization on the matrix’s rank becomes indispensable. Though robust loss functions are expected to be effective, their practical implementations are known difficult due to the non-smooth criterion functions encountered in the optimizations. To meet the challenges, we develop a highly efficient computing scheme taking advantage of the projection-free Frank–Wolfe algorithms that require only the first-order derivative of the criterion function. Our methodological framework is broad, extensively accommodating robust loss functions in conjunction with penalty functions in the context of matrix estimation problems. We establish the non-asymptotic error bounds of the matrix estimations with the Huber loss and nuclear norm penalty in two concrete cases: matrix completion with partial and noisy observations and reduced-rank regressions. Our theory demonstrates the merits from using robust loss functions, so that matrix-valued estimators with good properties are achieved even when heavy-tailed distributions are involved. We illustrate the promising performance of our methods with extensive numerical examples and data analysis.},
journal = {Mach. Learn.},
month = apr,
pages = {2723–2760},
numpages = {38},
keywords = {Frank–Wolfe algorithms, Huber loss, Matrix-valued parameters, Robust statistical methods, Non-asymptotic properties, Non-smooth criterion function}
}

@article{Mai25,
author = {Mai, The Tien},
title = {Concentration properties of fractional posterior in 1-bit matrix completion},
year = {2025},
issue_date = {Jan 2025},
publisher = {Kluwer Academic Publishers},
address = {USA},
volume = {114},
number = {1},
issn = {0885-6125},
url = {https://doi.org/10.1007/s10994-024-06691-z},
doi = {10.1007/s10994-024-06691-z},
abstract = {The problem of estimating a matrix based on a set of observed entries is commonly referred to as the matrix completion problem. In this work, we specifically address the scenario of binary observations, often termed as 1-bit matrix completion. While numerous studies have explored Bayesian and frequentist methods for real-value matrix completion, there has been a lack of theoretical exploration regarding Bayesian approaches in 1-bit matrix completion. We tackle this gap by considering a general, non-uniform sampling scheme and providing theoretical assurances on the efficacy of the fractional posterior. Our contributions include obtaining concentration results for the fractional posterior and demonstrating its effectiveness in recovering the underlying parameter matrix. We accomplish this using two distinct types of prior distributions: low-rank factorization priors and a spectral scaled Student prior, with the latter requiring fewer assumptions. Importantly, our results exhibit an adaptive nature by not mandating prior knowledge of the rank of the parameter matrix. Our findings are comparable to those found in the frequentist literature, yet demand fewer restrictive assumptions.},
journal = {Mach. Learn.},
month = jan,
numpages = {19},
keywords = {Matrix completion, Binary response, Logistic regression, Fractional posterior, Concentration rate, Low-rank priors}
}

@article{belhadji2020determinantal,
  title={A determinantal point process for column subset selection},
  author={Belhadji, Ayoub and Bardenet, R{\'e}mi and Chainais, Pierre},
  journal={Journal of machine learning research},
  volume={21},
  number={197},
  pages={1--62},
  year={2020}
}

@INPROCEEDINGS{Deshpande10,
  author={Deshpande, Amit and Rademacher, Luis},
  booktitle={2010 IEEE 51st Annual Symposium on Foundations of Computer Science}, 
  title={Efficient Volume Sampling for Row/Column Subset Selection}, 
  year={2010},
  volume={},
  number={},
  pages={329-338},
  keywords={Polynomials;Approximation methods;Matrix decomposition;Algorithm design and analysis;Vectors;Approximation algorithms;Principal component analysis;volume sampling;low-rank matrix approximation;row/column subset selection},
  doi={10.1109/FOCS.2010.38}}

@article{jesterGoldberg01,
author = {Goldberg, Ken and Roeder, Theresa and Gupta, Dhruv and Perkins, Chris},
title = {Eigentaste: A Constant Time Collaborative Filtering Algorithm},
year = {2001},
issue_date = {Jul 2001},
publisher = {Kluwer Academic Publishers},
address = {USA},
volume = {4},
number = {2},
issn = {1386-4564},
url = {https://doi.org/10.1023/A:1011419012209},
doi = {10.1023/A:1011419012209},
abstract = {Eigentaste is a collaborative filtering algorithm that uses universal queries to elicit real-valued user ratings on a common set of items and applies principal component analysis (PCA) to the resulting dense subset of the ratings matrix. PCA facilitates dimensionality reduction for offline clustering of users and rapid computation of recommendations. For a database of n users, standard nearest-neighbor techniques require O(n) processing time to compute recommendations, whereas Eigentaste requires O(1) (constant) time. We compare Eigentaste to alternative algorithms using data from Jester, an online joke recommending system.Jester has collected approximately 2,500,000 ratings from 57,000 users. We use the Normalized Mean Absolute Error (NMAE) measure to compare performance of different algorithms. In the Appendix we use Uniform and Normal distribution models to derive analytic estimates of NMAE when predictions are random. On the Jester dataset, Eigentaste computes recommendations two orders of magnitude faster with no loss of accuracy. Jester is online at: http://eigentaste.berkeley.edu},
journal = {Inf. Retr.},
month = jul,
pages = {133–151},
numpages = {19},
keywords = {recommender systems, collaborative filtering, dimensionality reduction, jokes}
}

@article{li2024novel,
  title={A novel nonconvex relaxation approach to low-rank matrix completion of inexact observed data},
  author={Li, Yan and Zhang, Liping},
  journal={SIAM Journal on Optimization},
  volume={34},
  number={3},
  pages={2378--2410},
  year={2024},
  publisher={SIAM}
}

@article{usevich2025structured,
  title={Structured Nuclear Norm Matrix Completion: Guaranteeing Exact Recovery via Block-Column Scaling},
  author={Usevich, Konstantin and Gillard, Jonathan and Dreesen, Philippe and Markovsky, Ivan},
  journal={Numerical Linear Algebra with Applications},
  volume={32},
  number={4},
  pages={e70031},
  year={2025},
  publisher={Wiley Online Library}
}
\end{document}